\title{\LARGE \bf A Real-Time Game Theoretic Planner for \\Autonomous Two-Player Drone Racing}
\author{Riccardo Spica, Davide Falanga, Eric Cristofalo, Eduardo Montijano, Davide Scaramuzza, Mac Schwager
\thanks{Toyota Research Institute ("TRI") provided funds to assist the authors with their research, but this article solely reflects the opinions and conclusions of its authors and not TRI or any other Toyota entity. This research was also supported by ONR grant N00014-16-1-2787 and a National Defense Science and Engineering Graduate Fellowship. We are grateful for this support.}
\thanks{This research was supported by the SNSF-ERC Starting Grant and the National Centre of Competence in Research (NCCR) Robotics, through the Swiss National Science Foundation.}
\thanks{This research was supported by the Spanish project Ministerio de Econom\'ia y Competitividad DPI2015-69376-R.}
\thanks{R. Spica, E. Cristofalo and M. Schwager are with the Department of Aeronautics and Astronautics, Stanford University, Stanford, CA 94305, USA {\tt\scriptsize rspica;ecristof;schwager@stanford.edu}}
\thanks{D. Falanga and D. Scaramuzza are with the Robotics and Perception Group, Dep. of Informatics, University of Zurich, and Dep. of Neuroinformatics, University of Zurich and ETH Zurich, Switzerland--{\tt\scriptsize http://rpg.ifi.uzh.ch}.}
\thanks{E. Montijano is with Instituto de Investigaci\'on en Ingenier\`ia de Arag\'on, Universidad de Zaragoza, Zaragoza 50018, Spain {\tt\scriptsize emonti@unizar.es}.}
}
\newcommand{\st}{:}
\newcommand{\defas}{\coloneqq}
\newcommand{\const}[1]{\overline{#1}}
\newcommand{\MixParDeriv}[3][]{\frac{\partial^{#1}#2}{{#3}}}
\newcommand{\parDeriv}[3][]{\frac{\partial^{#1}#2}{\partial {#3}^{#1}}}
\newcommand{\totDeriv}[3][]{\frac{\textrm{d}^{#1}#2}{\textrm{d}{#3}^{#1}}}
\newcommand{\norm}[1]{\left\|#1\right\|}
\newcommand{\abs}[1]{\left|#1\right|}
\newcommand{\evalin}[2]{{\left.#1\right|}_{#2}}
\newcommand{\kron}{\otimes}
\DeclareMathOperator{\atan}{atan2}
\newcommand{\real}[1]{\mathbb{R}^{#1}}
\newcommand{\setS}[1]{\mathbb{S}^{#1}}
\newcommand{\vect}[1]{\boldsymbol{#1}}
\newcommand{\matr}[1]{\boldsymbol{#1}}
\newcommand{\rotMat}[2]{{\prescript{#1}{}{\matr{R}}}_{#2}}
\newcommand{\yaw}{\psi}
\newcommand{\ex}{\vect{e}_1}
\newcommand{\ey}{\vect{e}_2}
\newcommand{\eye}[1]{\matr{I}_{#1}}
\newcommand{\zeros}[2]{
\ifthenelse{\equal{#2}{1}}{\vect{0}_{#1}}{\matr{\cancel{O}}_{#1 \times #2}}
}
\newcommand{\ones}[2]{
\ifthenelse{\equal{#2}{1}}{\vect{1}_{#1}}{\matr{1}_{#1 \kron #2}}
}
\newcommand{\posElem}{p}
\newcommand{\pos}{\vect{\posElem}}
\newcommand{\bearingElem}{\beta}
\newcommand{\bearing}{\vect{\bearingElem}}
\newcommand{\dist}{d}
\newcommand{\distMax}{\overline{\dist}}
\newcommand{\linVelElem}{v}
\newcommand{\linVelMax}{\overline{\linVelElem}}
\newcommand{\linVel}{\vect{\linVelElem}}
\newcommand{\angVelElem}{\omega}
\newcommand{\angVel}{\vect{\angVelElem}}
\newcommand{\linCtrlElem}{u}
\newcommand{\linCtrl}{\vect{\linCtrlElem}}
\newcommand{\linCtrlMax}{\overline{\linCtrlElem}}
\newcommand{\dt}{\delta t}
\newcommand{\state}{\vect{x}}
\newcommand{\yawRate}{\angVelElem}
\newcommand{\track}{\vect{\tau}}
\newcommand{\trackWidth}{w_{\track}}
\newcommand{\trackParam}{s}
\newcommand{\trackLength}{l_{\track}}
\newcommand{\nTours}{N}
\newcommand{\trackTan}{\vect{t}}
\newcommand{\trackNorm}{\vect{n}}
\newcommand{\trackCurv}{\kappa}
\newcommand{\strategy}{\vect{\theta}}
\newcommand{\objective}{f}
\newcommand{\objFinal}{\trackParam}
\newcommand{\bestresponse}{\objFinal^*}
\newcommand{\bestMap}{\mathcal{R}}
\newcommand{\strategySpace}{\Theta}
\newcommand{\inEqConJointEl}{\gamma}
\newcommand{\inEqConJoint}{\vect{\inEqConJointEl}}
\newcommand{\eqConOwn}{\vect{h}}
\newcommand{\inEqConOwn}{\vect{g}}
\newcommand{\lagMultInEq}{\vect{\mu}}
\newcommand{\lagMultEqOwn}{\vect{\lambda}}
\newcommand{\lagMultInEqOwn}{\vect{\nu}}
\newcommand{\lagMult}{\mu}
\newcommand{\multGain}{\alpha}
\newcommand{\residual}{r}
\newcommand{\skewZ}{\matr{S}}
\newcommand{\fov}{\alpha}
\newcommand{\videourl}{\url{https://youtu.be/cJW1RysDKDg}}
\newtheorem{lemma}{Lemma}
\newtheorem{theorem}{Theorem}
\newlist{expcases}{enumerate}{1} 
\setlist[expcases]{label= case~\Roman*:, ref=\Roman*,align=left}
\crefname{expcasesi}{case}{cases}
\Crefname{expcasesi}{Case}{Cases}
\crefname{equation}{}{}
\Crefname{equation}{Equation}{Equations}
\crefname{figure}{Fig.}{Figs.}
\Crefname{figure}{Figure}{Figures}
\crefname{part}{Part}{Parts}
\Crefname{part}{Part}{Parts}
\crefname{chapter}{Chapt.}{Chapts.}
\Crefname{chapter}{Chapter}{Chapters}
\crefname{section}{Sect.}{Sects.}
\Crefname{section}{Section}{Sections}
\crefname{subsection}{Sect.}{Sects.}
\Crefname{subsection}{Section}{Sections}
\crefname{subsubsection}{Sect.}{Sects.}
\Crefname{subsubsection}{Section}{Sections}
\crefname{appsec}{Appendix}{Appendices}
\Crefname{appsec}{Appendix}{Appendices}
\crefname{appendix}{Appendix}{Appendices}
\Crefname{appendix}{Appendix}{Appendices}
\crefname{subappendix}{Appendix}{Appendices}
\Crefname{subappendix}{Appendix}{Appendices}
\crefname{lemma}{Lemma}{Lemmas}
\Crefname{lemma}{Lemma}{Lemmas}
\crefname{assumption}{Assumption}{Assumptions}
\Crefname{assumption}{Assumption}{Assumptions}
\crefname{remark}{Remark}{Remarks}
\Crefname{remark}{Remark}{Remarks}
\crefname{theorem}{Theorem}{Theorems}
\Crefname{theorem}{Theorem}{Theorems}
\crefname{prop}{Prop.}{Props.}
\Crefname{prop}{Proposition}{Propositions}
\crefname{prob}{Problem}{Problems}
\Crefname{prob}{Problem}{Problems}
\crefname{constr}{Constraint}{Constraints} 
\Crefname{constr}{Constraint}{Constraints}
\crefname{algorithm}{Algorithm}{Algorithms}
\Crefname{algorithm}{Algorithm}{Algorithms}
\crefname{ALC@unique}{step}{steps}
\Crefname{ALC@unique}{Step}{Steps}
\begin{document}

\maketitle
\thispagestyle{empty}
\pagestyle{empty}

\begin{abstract}
To be successful in multi-player drone racing, a player must not only follow the race track in an optimal way, but also compete with other drones through strategic blocking, faking, and opportunistic passing while avoiding collisions.
Since unveiling one's own strategy to the adversaries is not desirable, this requires each player to independently \emph{predict} the other players' future actions.
Nash equilibria are a powerful tool to model this and similar multi-agent coordination problems in which the absence of communication impedes full coordination between the agents.
In this paper, we propose a novel receding horizon planning algorithm that, exploiting sensitivity analysis within an iterated best response computational scheme, can approximate Nash equilibria in real time.
We also describe a vision-based pipeline that allows each player to estimate its opponent's relative position.
We demonstrate that our solution effectively competes against alternative strategies in a large number of drone racing simulations.
Hardware experiments with onboard vision sensing prove the practicality of our strategy.
\end{abstract}

\section{SUPPLEMENTARY MATERIAL}
Video of the experiments: \videourl{}.

\section{INTRODUCTION}

Drone racing has recently become a popular sport with international competitions being held regularly and attracting a growing public~\cite{wdp}.
In these races, human pilots directly control the UAVs through a radio transmitter while receiving a first-person-view live stream from an onboard camera.
Human racers need years of training to master the advanced navigation and control skills that are required to be successful in this sport.
Many of these skills would certainly prove useful for a robot to safely and quickly move through a cluttered environment in, for example, a disaster response scenario.
For this reason, drone racing has attracted a significant interest from the scientific community, which led to the first autonomous drone racing competition being held during the IROS 2016 international conference~\cite{MooSunBalKim17}.

%

Most of the past research has focused on a time trial style of racing: a single robot must complete a racing track in the shortest amount of time. This scenario poses a number of challenges in terms of dynamic modeling, on-board perception, localization and mapping, trajectory generation and optimal control. Impressive results have been obtained in this context not only for autonomous UAVs~\cite{SunChoLeeLeeShi18}, but also for a variety of different platforms, such as cars~\cite{KaSuGe16,WilDreGolRehThe16,WilWagGolDreRehBooThe17} motorcycles~\cite{motobot}, and even sailboats~\cite{StPr08}.

Much less attention, on the other hand, has been devoted to the more classical multi-player style of racing that we address in this paper, sometimes called rotocross among drone racing enthusiasts. In addition to the aforementioned challenges, this kind of race also requires direct competition with other agents, incorporating strategic blocking, faking, and opportunistic passing while avoiding collisions.
Multi-player drone racing is then also a good testing ground for developing and testing more widely applicable non-cooperative multi-robot planning strategies.

\begin{figure}[t]
\centering
\includegraphics[width=\columnwidth]{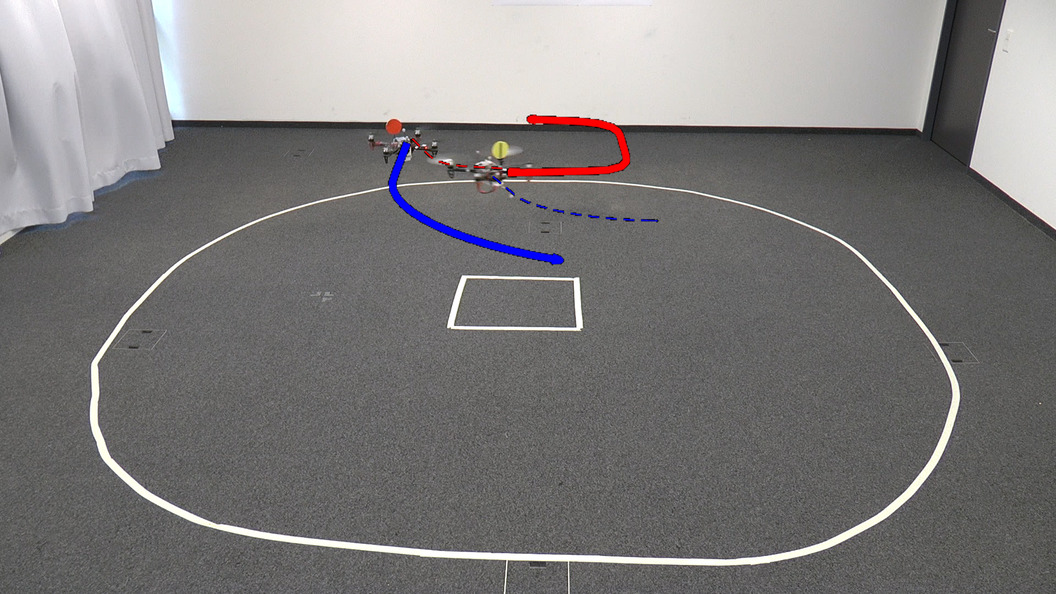}
\caption{Drone racing experiment.}
\label{fig:exp}
\end{figure}

Once the robots leave the protected and isolated environment of a factory floor, it becomes crucial for them to be able to safely and effectively interact with other robots and, perhaps more importantly, with human beings. In particular, they must avoid collisions that could compromise their functioning or cause injuries to humans.
In some cases, safety can be ensured by designing specific (and common) policies for the agents or by relying on communication. In a drone race, however, it would not be reasonable to impose a common control policy to all players or even to require players to unveil their future actions to the opponents. Another example, in this context, is autonomous driving in presence of human-driven cars. Indeed, assuming complex human-robot communication, beyond some basic form of signaling, is not realistic.

In order to guarantee safety in absence of communication, robots must be able to \emph{predict} what the other robotic/human agents will do and act consequently. To guarantee performance and robustness, the models employed for such predictions must also consider the reactive behavior of other agents to the robots own actions induced, for example, by reciprocal collision avoidance constraints.

In this paper, we consider two-player autonomous drone racing as a practical application to develop effective strategies for scenarios involving multiple rational agents that: $i$) do not communicate their policies to each other, $ii$) know each other's goals and constraints, $iii$) behave reactively in order to avoid collisions with other agents. We believe that game theory~\cite{BasOls98} is the most appropriate tool to model scenarios such as these.

Motivated by the success obtained by Model Predictive Control (MPC) in the development of real-time optimal control schemes, we apply similar receding horizon control strategies in the context of multi-player drone racing.
Differently from a standard MPC planner, however, our strategy also takes into account other agents reactions to the ego agent actions. We achieve this by employing an iterated best response computation scheme: each player alternatively solves an optimal control problem for \emph{each} player while keeping the other player's strategy constant. In addition to this, in order to fully capture and exploit the effects of the collision avoidance constraints, we also use sensitivity analysis to approximate the effects of one player's actions on its opponent's cost.

Despite the fact that Nash equilibria are often difficult to achieve or verify in dynamic games, we prove that, if our algorithm converges, the output satisfies necessary conditions for a Nash equilibrium. In practice, we find that the algorithm does converge, providing a theoretical foundation for our technique.  The algorithm also runs in real time, at 20Hz, on standard hardware.

This work focuses on the competition and interaction between the drones, not on the perception and navigation of the race course or environment.  Hence, we assume that each drone has a prior map of the race course, and has access to its own position (e.g.~with GPS or an external motion capture system).  However, each drone estimates the relative position of its competitor with an on board monocular camera.

We demonstrate the effectiveness of our approach in a large number of simulations in which our planner competes against multiple alternative strategies. We also prove the applicability of our method in hardware experiments with two quadrotor robots racing on an indoor circular course.

The rest of the paper is organized as follows. First, in \cref{sec:previous}, we review some of the existing relevant literature. Then, in \cref{sec:formulation}, we model the drone racing problem and introduce the associated sensing and control constraints. Subsequently, in~\cref{sec:game}, we formulate the position control problem as a Nash equilibrium search and we detail the numerical methods used to obtain real time solutions. We then describe our orientation control strategy. Then, in~\cref{sec:estimation}, we provide details on the algorithms used for estimating both the ego robot and the opponent positions. In \cref{sec:experiments} we report simulation and experimental results obtained by letting our method compete against alternative ones. Finally, in~\cref{sec:conclusion}, we conclude the paper and outline future extensions.

\section{PREVIOUS WORKS}\label{sec:previous}

In this section, we give a brief overview of some of the literature relevant to this work first in the context of single-robot motion planning, then in the multi-robot setting. We discuss both classical approaches and more recent results. In particular, we focus our attention on approaches exploiting results from game theory using either a Stackelberg or a Nash information patterns, which are more closely related to our work.

\subsection{Single-robot planning}
A number of effective solutions for motion planning in presence of both \emph{static} and \emph{dynamic} obstacles have been proposed in the past. Some classical works use artificial potential fields~\cite{MacKle1985,Kha1986}, geometric approaches~\cite{FioShi1998} or sampling-based methods~\cite{FraDahFer02,KarFra11}.
More recently, also thanks to the availability of efficient numerical optimization schemes, a number of Model Predictive Control (MPC) and Reinforcement Learning (RL) approaches have also been proposed~\cite{LiDoMo15,WilDreGolRehThe16,WilWagGolDreRehBooThe17}.
More specifically to our application, the authors of~\cite{SunChoLeeLeeShi18} ranked first in the IROS 2016 drone racing competition exploiting an optical flow sensor and a direct visual servoing control scheme.

Most of these works rely on simple ``open-loop'' models to predict the obstacle motion. In many situations, however, obstacles behave in a \emph{reactive} way. A human, for example, will in turn actively avoid collision with the controlled robot and, thus, her/his motion will be strongly affected by that of the robot. This reactiveness creates a ``loop closure'' which, if not properly managed, can induce oscillatory effects sometimes referred to as \emph{reciprocal dances}~\cite{FueChi2000}.

\subsection{Cooperative multi-robot control}
Impressive results have been obtained by relying on \emph{communication} to coordinate multiple robots in a navigation context or, more in general, to realize a common task~\cite{OlfFaxMur2007,AndYuFidHen2008}. In other cases, communication is achieved more implicitly by an exchange of forces~\cite{WanSch2016}. Some works remove the communication layer but rely on a common (or at least known) set of motion policies to achieve cooperation among multiple agents~\cite{FioShi1998,CaiYanZhuLia2007,HofTom08,vanGuyLinMan2011,ZhoWanBanSch2017}.

\subsection{Game theoretic control using a Stackelberg information pattern}
While broadly used in economics and social science, game theory has not yet attracted, in our opinion, a sufficient interest from the robotics community, mostly due to the computational complexity typically associated with these methods.

Some interesting results have been obtained applying game theoretic concepts to robust $H^\infty$ optimal control design (see~\cite{BaBe08} for a recent review on the topic). The disturbance acting on a system can be modeled as an antagonistic agent that explicitly aims at minimizing the system performance thus giving rise to a zero-sum differential game~\cite{BasOls98}.
Similar models have also been employed to calculate the so called \emph{reachable set} of a system: the set of states from which there exists at least one disturbance that brings the system to a dangerous state~\cite{MitBayTom2005}. 
Finding the reachable sets usually requires the integration of the so-called Hamilton-Jacobi-Isaacs (HJI) partial differential equations. Apart from simple cases, the computational complexity of these methods often limits their use to offline implementations. Real-time solutions can be obtained, in some cases, by leveraging suitable approximations~\cite{GiHoHuViTo11}.


More recently, similar approaches have also been employed in the context of autonomous driving. In~\cite{SadSasSesDra2016b}, for example, the interaction between an autonomous car and a human driven one is modeled as a Stackelberg game: the human is assumed to know in advance what the autonomous car will do and to respond optimally according to an internal cost function. This results in a nested optimization problem that can be exploited to control the human motion~\cite{SadSasSesDra2016a} or to reconstruct the cost function driving his/her actions~\cite{SadSasSesDra2016b}.

\subsection{Game theoretic control using a Nash information pattern}
Giving the other players some information advantage can, in general, improve the robustness of the system. However, in many applications such as drone racing and autonomous driving, no agent would have any information advantage with respect to the others. For this reason, we believe that Stackelberg information models could result in overly conservative actions. A more realistic model is that of Nash equilibria which, instead, assume a fully symmetric information pattern.


A very recent paper~\cite{DreGer2017} proposes a control algorithm for coordinating the motion of multiple cars through an intersection exploiting generalized Nash equilibria. The numerical resolution is in the order of several seconds which is close to real time but still not sufficient for the approach to be used for online control.

In the context of car racing, the authors of~\cite{LinLyg17} investigate both Stackelberg and Nash equilibria. Computational performance close to real time, however, can only be obtained in a simplified scenario in which only one of the two players avoids collisions. In addition to this, the authors also discuss the importance of exploiting blocking behaviors. However, while in our work these behaviors naturally emerge from the use of sensitivity analysis, in~\cite{LinLyg17} these are hardcoded in the cost function optimized by the players.

The main limiting factor for applying game theory more widely to robotic control problems seems to lie in the associated computational complexity.
We believe, however, that game theory can still be used as an \emph{inspiration} for guiding the design of effective and computationally efficient heuristics.
This paper is a first step in this direction.

\section{PRELIMINARIES}\label{sec:formulation}

Consider two quadrotor UAVs competing against each other in a drone racing scenario.
In order to simplify the high level control strategy, we will assume that the robots fly at a constant altitude with simplified holonomic dynamics given by
\begin{equation}\label{eq:dynamics}
\begin{bmatrix} \dot{\pos}_i \\ \dot{\yaw}_i \end{bmatrix} =
\begin{bmatrix} \rotMat{}{i} & 0 \\
				\zeros{2}{1} &              1  \end{bmatrix}
\begin{bmatrix} \linVel_i \\ \yawRate_i \end{bmatrix},
\end{equation}
where ${\pos}_i\in\real{2}$ is the robot horizontal position in the world frame, $\rotMat{}{i}=\rotMat{}{z}(\yaw_i)\in\real{2\times2}$ represents the rotation matrix associated to the robot yaw $\yaw_i\in\setS{1}$, and $\linVel_i\in\real{2}$ and $\yawRate_i\in\real{}$ are the \emph{body-frame} linear velocity and angular rates, which are assumed to be known and controllable.
Given~\cref{eq:dynamics}, the robot state is $\state_i = (\pos_i, \yaw_i) \in \real{2} \times \setS{1}$ and it is assumed locally available, e.g.~using onboard GPS and compass sensors.

Due to limitations of onboard actuators, the robots linear velocities are limited, i.e.
\begin{equation*}
\norm{\linVel_i} = \norm{\dot{\pos}_i}\le \linVelMax_i\in\real{+}.
\end{equation*}

\begin{figure}[tb]
\centering
\pgfdeclarepatternformonly{my_checkerboard}{\pgfpointorigin}{\pgfqpoint{2mm}{2mm}}{\pgfqpoint{2mm}{2mm}}%
{
  \pgfpathrectangle{\pgfpointorigin}{\pgfqpoint{1mm}{1mm}}
  \pgfpathrectangle{\pgfqpoint{1mm}{1mm}}{\pgfqpoint{1mm}{1mm}}
  \pgfusepath{fill}
}

 \begin{tikzpicture}[scale=1]

	\def\tw{0.73cm};
	\coordinate (T) at (0,0);
	\newcommand{\vertT}
  {(+1.158,-1.882),(+1.369,-1.882),(+1.579,-1.882),(+1.790,-1.882),(+2.000,-1.882),(+2.309,-1.833),(+2.482,-1.758),(+2.638,-1.652),(+2.771,-1.519),(+2.877,-1.364),(+2.951,-1.191),(+3.000,-0.882),(+3.000,-0.776),(+3.000,-0.671),(+3.000,-0.566),(+3.000,-0.461),(+3.000,-0.355),(+3.000,-0.250),(+3.000,-0.145),(+3.000,-0.040),(+3.000,+0.066),(+3.000,+0.171),(+3.000,+0.276),(+3.000,+0.381),(+3.000,+0.487),(+3.000,+0.592),(+3.000,+0.697),(+3.000,+0.802),(+3.000,+0.908),(+3.000,+1.013),(+3.000,+1.118),(+2.951,+1.427),(+2.857,+1.635),(+2.717,+1.815),(+2.541,+1.960),(+2.336,+2.060),(+2.114,+2.112),(+1.886,+2.112),(+1.664,+2.060),(+1.460,+1.960),(+1.283,+1.815),(+1.143,+1.635),(+1.049,+1.427),(+1.000,+1.118),(+1.000,+0.993),(+1.000,+0.868),(+1.000,+0.743),(+1.000,+0.618),(+0.951,+0.309),(+0.856,+0.102),(+0.717,-0.079),(+0.540,-0.223),(+0.336,-0.324),(+0.114,-0.375),(-0.114,-0.375),(-0.336,-0.324),(-0.541,-0.223),(-0.717,-0.079),(-0.857,+0.102),(-0.951,+0.309),(-1.000,+0.618),(-1.000,+0.743),(-1.000,+0.868),(-1.000,+0.993),(-1.000,+1.118),(-1.049,+1.427),(-1.143,+1.635),(-1.283,+1.815),(-1.460,+1.960),(-1.664,+2.060),(-1.886,+2.112),(-2.114,+2.112),(-2.336,+2.060),(-2.541,+1.960),(-2.717,+1.815),(-2.857,+1.635),(-2.951,+1.427),(-3.000,+1.118),(-3.000,+1.013),(-3.000,+0.908),(-3.000,+0.802),(-3.000,+0.697),(-3.000,+0.592),(-3.000,+0.487),(-3.000,+0.381),(-3.000,+0.276),(-3.000,+0.171),(-3.000,+0.066),(-3.000,-0.040),(-3.000,-0.145),(-3.000,-0.250),(-3.000,-0.355),(-3.000,-0.461),(-3.000,-0.566),(-3.000,-0.671),(-3.000,-0.776),(-3.000,-0.882),(-2.951,-1.191),(-2.877,-1.364),(-2.771,-1.519),(-2.638,-1.652),(-2.482,-1.758),(-2.309,-1.833),(-2.000,-1.882),(-1.790,-1.882),(-1.579,-1.882),(-1.369,-1.882),(-1.158,-1.882),(-0.948,-1.882),(-0.737,-1.882),(-0.527,-1.882),(-0.316,-1.882),(-0.106,-1.882),(+0.105,-1.882),(+0.316,-1.882),(+0.526,-1.882),(+0.737,-1.882),(+0.947,-1.882)}

  \newcommand{\normal}[2]{
    ($(#1)!\tw!90:(#2)$)
  }
  \newcommand{\tangent}[2]{
    ($(#1)!\tw!(#2)$)
  }

  \edef\points{}
  \foreach \point [count=\i] in \vertT {
      \pgfmathsetmacro\ii{int(\i-1)}
      \node[coordinate] (T\ii) at \point {} ;
      \xdef\points{(T\ii) \points}
  }

  \foreach \coord [count=\i] in \vertT {
    \pgfmathsetmacro\ii{int(\i-1)}
    \coordinate [at=\coord, name=tmp];
    \coordinate (T\ii) at ($(T)+(tmp)$);
  }

  \foreach \coord [count=\i] in \vertT {
    \pgfmathsetmacro\ii{(\i-1)}
    \pgfmathsetmacro\ip{int(mod(\ii+1,117))}
    \pgfmathsetmacro\ii{int(\i-1)}
    \coordinate [at=\normal{T\ii}{T\ip}, name=Tin\ii];
    \coordinate [at=($(T\ii)!\tw!-90:(T\ip)$), name=Tout\ii];
  }

  \draw [double=white,double distance=2*\tw,thick,tension=.8] plot [smooth cycle] coordinates {\points};
	\draw [red, thick] plot [smooth cycle,tension=0.8] coordinates {\points};
	\node [red] at (T100) [above] {$\track$};

  \draw [dashed] plot coordinates {\points};

  \draw (T3) -- (Tin3) node [pos=.5,right] {$\trackWidth$ \pgfmathparse{dim(\vertT)}};

	\node [quadcopter top,minimum width=.4cm,label={[label distance=.2cm]30:$\pos_i$}] at ($(T32)!-.6!(Tin32)$) {};
	\draw [fill] (T32) circle (2pt) node [left]{$\track_i$};

	\draw [->] (T32) -- ($(T32)!.5cm!90:(T33)$) node [below, near end] {$\trackNorm_i$};
	\draw [->] (T32) -- ($(T32)!.5cm!(T33)$) node [above,near end] {$\trackTan_i$};

  \def\mypath{(0,0) -- +(0,1) arc (180:0:1.5cm) -- +(0,-1)}
  \draw [pattern=my_checkerboard] ($(T0)-(1mm,1.2*\tw)$) rectangle ($(T0)+(1mm,1.2*\tw)$);

 \end{tikzpicture}
\caption{Representation of the race track used for the simulations. The track is parameterized by its center line $\track$ and its half width $\trackWidth$. Given the current robot position $\pos_i$, we can define a local track frame with origin $\track_i$ as the closest point to $\pos_i$ and with $\trackTan$ and $\trackNorm$ being the local tangent and normal vectors to the track in $\track_i$.}
\label{fig:track}
\end{figure}

The race track center line is defined by a twice continuously differentiable immersed plane closed curve $\track$ (see~\cref{fig:track}). For such a curve, there exists an arch-length parameterization
\begin{equation*}
\track: [0,\trackLength]\mapsto \real{2} \text{, with } \track(0) = \track(\trackLength)
\end{equation*}
where $\trackLength$ is the total length of the track.
Moreover, one can also define a local signed curvature $\trackCurv$ and unit tangent and normal vectors ($\trackTan$ and $\trackNorm$ respectively) as follows
\begin{align}
\trackTan &=  \track' \label{eq:tangent}\\ 
\trackNorm\trackCurv &= \track''.  \label{eq:normal}
\end{align}

To remain within the boundaries of the track, the robot's distance from the track center line must be smaller than the (constant) track width $\trackWidth\in\real{+}$, i.e.
\begin{equation*}
\abs{\trackNorm(\trackParam_i)^T [\pos_{i}-\track(\trackParam_i)]} \le \trackWidth
\end{equation*}
where $\trackParam_i\in[0,\trackLength]$ is the robot position along the track, i.e.~the arch length of the point on the track that is closest to $\pos_{i}$
\begin{equation}\label{eq:track_pos}
\trackParam_i(\pos_i)=\arg\min_\trackParam{\tfrac{1}{2}\norm{\track(\trackParam)-\pos_i}^2}.
\end{equation}

In order to avoid potential collisions, each robot always maintains a minimum distance $\distMax_i\in\real{+}$ with respect to its opponent, i.e.
\begin{equation}\label{eq:coll_avoid}
\norm{\pos_{i}-\pos_{j}} \ge \distMax_i.
\end{equation}
Note that here, as well as in the rest of the paper, we always use $i$ ($=1$ or $2$) to refer to a generic robot and $j$ ($=2$ or $1$ respectively) to refer to its opponent.

Each UAV is also equipped with an onboard calibrated monocular camera that can detect and track a spherical marker attached to the opponent body, provided that this latter is in the camera field of view.
Assuming that the radius of the spherical target is known, they can retrieve the relative position of the opponent expressed in their local body-frame, i.e.
\begin{equation}\label{eq:bearing}
\pos_{ij} = \rotMat{}{i}^T\left(\pos_{j}-\pos_{i}\right).
\end{equation}
Fusing~\cref{eq:bearing} with their ego state estimate, each robot can then estimate the world frame position of its opponent.

Since the robot cameras have a limited field of view, we need to impose a visibility constraint for measurement~\cref{eq:bearing} to be available. We assume that the size of the spherical markers mounted on the robots is such that, even when the two robots are at a minimum distance $\distMax_i$ from each other, a robot marker is entirely visible by its opponent camera, provided that the following condition is satisfied:
\begin{equation}\label{eq:fov_constr}
\frac{\pos_{ij}^T}{\norm{\pos_{ij}}}\ex \defas \bearing_{ij}^T\ex \ge \cos(\fov)
\end{equation}
where $ \bearing_{ij}$ is a relative bearing vector, $\fov$ is the camera field of view and $\ex=(1,0)$ is the optical axis which is assumed, without loss of generality, to be aligned with the $x$-axis of the robot body frame.


Since we exploit a receding horizon control approach, the objective for each robot is to have a more advanced position along the track, with respect to the opponent, at the end of the planning horizon $T$. The final position is given by:
\begin{equation*}
\dist_{i} = \nTours_i \trackLength + \trackParam_i(\pos_i(t+T)) 
\end{equation*}
where $\nTours_i$ is the number of completed track loops and $\trackParam_i$ is computed as in~\cref{eq:track_pos}. Neglecting the constant terms, the objective function of player $i$ is then to maximize the difference
\begin{equation}\label{eq:objective}
\objective_i = \trackParam_i(\pos_i(t+T))-\trackParam_j(\pos_j(t+T)).
\end{equation}


Because of the collision avoidance constraints~\cref{eq:coll_avoid}, in order to calculate its optimal trajectory, each robot needs access to its opponent's strategy. However, since the robots are competing against each other, we do not expect them to share/communicate their plans. Instead, each robot needs to model the opponent and predict its actions.
We believe that game theory~\cite{BasOls98} is the correct framework to describe this non-cooperative scenario. In particular, drone racing can be seen as a \emph{zero-sum} differential game because clearly from~\cref{eq:objective} one has $\objective_1 + \objective_2 = 0$.

Since the cost function~\cref{eq:objective} only depends on the robots' positions and the constraints on the robot positions and yaw angles can be separated, we perform the planning for the robots' position and yaw angles separately: first we apply a game theoretic approach to calculate optimal control inputs for the translational part of the robot dynamics; then we calculate the yaw angle control in such a way that the visibility constraints~\cref{eq:fov_constr} remain satisfied at all times given the planned/expected translational motions.

\section{GAME THEORETIC FORMULATION}\label{sec:game}

In this section we address the planning problem for the translational component of the robot state, i.e.~the first row of~\cref{eq:dynamics}.
Since the robots know their relative positions and their state with respect to the world frame, we can rewrite the optimization problem in world frame coordinates. By doing this, the robots dynamics further simplify to $\dot{\pos}_i = \linCtrl_i$. We also note that, since the second term in~\cref{eq:objective} does not depend on player $i$'s actions, this can be neglected without changing the set of optimal solutions for player $i$.

To make the problem tractable, we discretize the planning horizon and we assume piecewise constant control inputs for both players, i.e.~$\linCtrl_i(t) = \linCtrl_i^k/\delta t = \textrm{const }\forall t\in[t_0,t_0+k\delta t)$ where $\delta t$ is a constant sampling interval. Defining $\strategy_i=(\pos_i^1,\dots,\,\pos_i^N,\linCtrl_i^1,\dots,\linCtrl_i^N)$ and $\linCtrlMax_i = \linVelMax_i \dt$, the problem can then be rewritten as
\begin{subequations}
\begin{align}
\max_{\strategy_i}\,\,& \trackParam_i(\pos_i^N)
 \label{eq:obj} \\
\text{s.t. }
\quad &\pos_{i}^{k} = \pos_{i}^{k-1}+\linCtrl_{i}^{k}\label{eq:cons_dyn}  \\
&\norm{\pos_j^k-\pos_i^k} \ge \distMax_i \label{eq:obs_avoid}\\
&\abs{{\trackNorm(\pos_i^k)}^T [\pos_i^k-\track(\pos_i^k)]} \le \trackWidth \label{eq:cons_track} \\
&\norm{\linCtrl_i^k} \le \linCtrlMax_i. \label{eq:cons_max_vel}
\end{align}\label{eq:game_discrete}
\end{subequations}

For simplicity of notation let us rewrite problem~\cref{eq:game_discrete} in a more compact and general form
\begin{subequations}
\begin{align}
\max_{\strategy_i}\,\,& {\trackParam_i(\strategy_i)} \label{eq:pli_optim_prob_cn0}\\
\text{s.t. }
&\eqConOwn_i(\strategy_i) = 0 \label{eq:pli_optim_prob_cn1}\\
&\inEqConOwn_i(\strategy_i) \le 0 \label{eq:pli_optim_prob_cn2}\\
&\inEqConJoint_i(\strategy_i,\strategy_j) \le 0 \label{eq:pli_optim_prob_cn4}
\end{align}\label{eq:pli_optim_prob}
\end{subequations}
where:
\begin{itemize}
\item $\eqConOwn_i$ represents the equality constraints~\cref{eq:cons_dyn} involving a single player;
\item $\inEqConOwn_i$ represents the inequality constraints~\cref{eq:cons_track,eq:cons_max_vel} involving a single player;
\item $\inEqConJoint_i$ represents the inequality constraints~\cref{eq:obs_avoid} involving both players.
\end{itemize}
Let us also define $\strategySpace_i \subseteq \real{4N}$ as the space of admissible strategies for player $i$, i.e.~strategies that satisfy~\crefrange{eq:pli_optim_prob_cn1}{eq:pli_optim_prob_cn4}. Note that, due to~\cref{eq:obs_avoid}, one has $\strategySpace_i=\strategySpace_i(\strategy_j)$, i.e.~the strategy of one player determines the set of admissible strategies of its opponent and, as a consequence, can influence this latter's behavior.

In a game, the concept of an \emph{optimal solution} loses meaning because, in general, and especially in a zero-sum game, it is not possible to find a pair of strategies $(\strategy_1,\strategy_2)$ that maximize the cost function of both agents simultaneously. On the other hand, various types of \emph{equilibria} can be defined depending on the degree of cooperation between the agents and the information pattern of the game (see~\cite{BasOls98} for a complete description of the possible alternatives).
In this work, in particular, we exploit the concept of \emph{Nash equilibria}, which, by modeling a perfectly symmetric information pattern, do not induce overly optimistic or conservative behaviors.

A Nash equilibrium is a strategy profile $(\strategy_1^*,\strategy_2^*)\in \strategySpace_1\times\strategySpace_2$ such that no player can improve its own outcome by unilaterally changing its own strategy, i.e.
\begin{equation}\label{eq:nash}
\strategy_i^* = \arg\max_{\strategy_i\in\strategySpace_i(\strategy_j^*)} \trackParam_i(\strategy_i)
\end{equation}
An alternative definition of Nash equilibria can be given by defining a \emph{best reply map}
\begin{equation*}
\bestMap_i(\strategy_j) = \left\{ \strategy_i \in \strategySpace_i(\strategy_j) \st \trackParam_i(\strategy_i) =  \bestresponse_i(\strategy_j) \right\}
\end{equation*}
where
\begin{equation}
\bestresponse_i(\strategy_j) = \max_{\strategy_i\in\strategySpace_i(\strategy_j)} {\trackParam_i(\strategy_i)}\label{eq:best_response_return}
\end{equation}
is player $i$'s \emph{best-response return} to player $j$'s strategy $\strategy_j$.
One can show that a Nash equilibrium is a fixed point of the best reply map, i.e.~such that $\strategy_i^* \in \bestMap_i(\strategy_j^*)$.

Unfortunately, since problem~\cref{eq:game_discrete} is not convex due to~\cref{eq:obs_avoid}, in general multiple Nash equilibria may exist (e.g.~left vs right side overtaking). Additionally, computing the \emph{exact} value of some Nash equilibrium, generally requires numerical algorithms whose computational complexity makes them still not well suited for online robot control. Therefore, the next section describes an iterative algorithm that allows to \emph{approximate} Nash equilibria in real time.

\subsection{Numerical resolution of Nash equilibria}\label{sec:num_game}

In order to approximate Nash equilibria in real time, we use an iterated best response algorithm (IBR). Starting from an initial guess of the Nash equilibrium strategy profile, we update each player's strategy, alternatively, to the best-response to the current opponent's strategy. This is done by solving a standard optimization problem in which one player strategy is allowed to change while the opponent's one is kept constant. Intuitively, if the resulting sequence of strategy profiles converges, it follows that each player is best-responding to its opponent. If this is the case, then no profitable unilateral change of strategy exists as required by the Nash equilibrium definition~\cref{eq:nash}.

From our perspective, in the aim of developing a real time planner, IBR has the advantage that one can finely tune how much each player takes into account the reactivity of its opponent. By limiting the number of iterations per planning step, one can cover a spectrum of behaviors ranging from a very efficient, but naive, classical optimal control problem (with a fixed guess for the opponent strategy) to a more computationally expensive but fully game theoretic approach.

Unfortunately, a direct application of IBR to~\cref{eq:game_discrete} does not allow to fully capture the implications of the collision avoidance constraints~\cref{eq:obs_avoid}. As already mentioned, in fact, since player $i$ has no direct influence over the final position of player $j$ (i.e.~$\trackParam_j$), the second term in~\cref{eq:objective} can be neglected in~\cref{eq:game_discrete}. However, since player $j$ is calculating its strategy by solving an optimization problem similar to~\cref{eq:game_discrete}, due to the presence of the joint constraints~\cref{eq:obs_avoid}, player $i$ does have an effect on $\trackParam_j^*(\strategy_i^*)$ (see the counterpart of~\cref{eq:best_response_return} for player $j$). In other words, while player $i$ does not affect player $j$'s final position \emph{in general}, it does affect it at the Nash equilibrium.
To capture these effects, we propose to substitute~\cref{eq:pli_optim_prob_cn0} with the following cost function
\begin{equation*}
\trackParam_i(\strategy_i) - \multGain\bestresponse_j(\strategy_i)
\end{equation*}
where $\multGain \ge 0$ is a free parameter.

A closed form expression for $\bestresponse_j(\strategy_i)$ is hard to obtain. Inspired by~\cite{RaiEht2000}, we can, however, exploit sensitivity analysis to calculate a linear approximation around the current guess for the Nash equilibrium strategy profile.


Let us assume that, at the $l$-th iteration, a guess $\strategy_i^{l-1}$ for player $i$'s strategy is available to player $j$. Given this strategy for its opponent, player $j$ can solve the optimal control problem~\cref{eq:game_discrete} with $\strategy_i=\strategy_i^{l-1}$ (fixed). This step will result in a new best-responding strategy for player $j$, $\strategy_j^{l}$, with the associated payoff $\bestresponse_j(\strategy_i^{l-1})$. Assuming player $i$ is now given the opportunity to modify its own strategy, we are interested in characterizing the variations of $\bestresponse_j(\strategy_i)$ for $\strategy_i$ in the vicinity of $\strategy_i^{l-1}$ using a first-order Taylor approximation
\begin{equation}\label{eq:brr_approx}
  \bestresponse_j(\strategy_i) \approx \bestresponse_j(\strategy_i^{l-1}) + \evalin{\totDeriv{\bestresponse_j}{\strategy_i}}{\strategy_i^{l-1}}(\strategy_i-\strategy_i^{l-1}).
\end{equation}

Exploiting the Karush–Kuhn–Tucker (KKT) necessary optimality conditions associated to player $j$'s optimal control problem~\cref{eq:pli_optim_prob} one can prove the following result.
\begin{lemma}\label{lemma:sensitivity}
If $\bestresponse_j$ is the optimal value of an optimization problem obtained from~\cref{eq:pli_optim_prob} by exchanging subscripts $i$ and $j$, then
\begin{equation}\label{eq:sensitivity}
\evalin{\totDeriv{\bestresponse_j}{\strategy_i}}{\strategy_i^{l-1}} =
-\lagMultInEq_j^l \evalin{\parDeriv{\inEqConJoint_j}{\strategy_i}}{(\strategy_i^{l-1},\strategy_j^{l})}
\end{equation}
where $\strategy_j^{l}\in\bestMap_j(\strategy_i^{l-1})$ is the best-response of player $j$ to $\strategy_i^{l-1}$ and $\lagMultInEq_j^l$ is the row vector of Lagrange multipliers associated to the joint inequality constraints~\cref{eq:pli_optim_prob_cn4}.
\end{lemma}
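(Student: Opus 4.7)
The plan is to apply the envelope theorem of parametric nonlinear programming to player $j$'s optimization problem, treating the opponent's strategy $\strategy_i$ as a parameter. The best-response return $\bestresponse_j(\strategy_i)$ from~\cref{eq:best_response_return} is the value function of this parametric program, and its total derivative with respect to the parameter coincides with the partial derivative, in the parameter only, of the Lagrangian evaluated at a KKT point.

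First, I would form the Lagrangian of the problem obtained from~\cref{eq:pli_optim_prob} by exchanging subscripts $i$ and $j$,
\begin{equation*}
\mathcal{L}_j = \trackParam_j(\strategy_j) - \lagMultEqOwn_j\,\eqConOwn_j(\strategy_j) - \lagMultInEqOwn_j\,\inEqConOwn_j(\strategy_j) - \lagMultInEq_j\,\inEqConJoint_j(\strategy_j,\strategy_i),
\end{equation*}
and write the KKT conditions at the best-response $(\strategy_j^{l},\lagMultEqOwn_j^{l},\lagMultInEqOwn_j^{l},\lagMultInEq_j^{l})$ to $\strategy_i^{l-1}$: stationarity $\nabla_{\strategy_j}\mathcal{L}_j=0$, primal and dual feasibility, and complementary slackness $\lagMultInEq_j^{l}\inEqConJoint_j(\strategy_j^{l},\strategy_i^{l-1})=0$ and $\lagMultInEqOwn_j^{l}\inEqConOwn_j(\strategy_j^{l})=0$. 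Under standard regularity assumptions at this KKT point (linear independence constraint qualification, strict complementary slackness, and the second-order sufficient condition), the implicit function theorem applied to the KKT system yields continuously differentiable optimal primal and dual variables $\strategy_j^{*}(\strategy_i),\lagMultEqOwn_j^{*}(\strategy_i),\lagMultInEqOwn_j^{*}(\strategy_i),\lagMultInEq_j^{*}(\strategy_i)$ in a neighborhood of $\strategy_i^{l-1}$, taking the values $\strategy_j^{l},\lagMultEqOwn_j^{l},\lagMultInEqOwn_j^{l},\lagMultInEq_j^{l}$ at that point.

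Because at any KKT point the equality-constraint terms vanish by primal feasibility and the inequality-constraint terms vanish by complementary slackness, one has the identity $\bestresponse_j(\strategy_i)\equiv\mathcal{L}_j(\strategy_j^{*}(\strategy_i),\lagMultEqOwn_j^{*}(\strategy_i),\lagMultInEqOwn_j^{*}(\strategy_i),\lagMultInEq_j^{*}(\strategy_i);\strategy_i)$ on a neighborhood of $\strategy_i^{l-1}$. Differentiating this identity in $\strategy_i$, the contributions involving $d\strategy_j^{*}/d\strategy_i$ vanish by stationarity, and those involving derivatives of the multipliers vanish because $\eqConOwn_j(\strategy_j^{*})\equiv 0$ along the solution while for each inequality constraint either the constraint itself or its multiplier is identically zero in a neighborhood (by strict complementarity). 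What remains is the explicit partial of $\mathcal{L}_j$ with respect to $\strategy_i$; since $\trackParam_j$, $\eqConOwn_j$ and $\inEqConOwn_j$ do not depend on $\strategy_i$, this collapses to the right-hand side of~\cref{eq:sensitivity}.

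The main obstacle is on the regularity side rather than the algebra. Player $j$'s problem~\cref{eq:pli_optim_prob} is nonconvex because of~\cref{eq:obs_avoid}, so in general $\bestresponse_j$ is only directionally differentiable where the active set changes under a perturbation of $\strategy_i$, and~\cref{eq:sensitivity} has to be interpreted at the current KKT iterate under the tacit assumption that LICQ and strict complementarity hold there. Inactive joint constraints contribute nothing to~\cref{eq:sensitivity} because the corresponding entries of $\lagMultInEq_j^{l}$ are zero, so only the currently active collision-avoidance constraints drive the first-order coupling exploited by the linear approximation~\cref{eq:brr_approx}.
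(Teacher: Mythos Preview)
Your argument is correct and is the standard Lagrangian-based envelope theorem; the paper's own proof reaches the same conclusion by a slightly more bare-hands route, working on a streamlined scalar model $\max_x \objFinal(x)$ subject to $\inEqConJointEl(x,c)=0$, applying the chain rule to $\objFinal(x^*(c))$, substituting the KKT stationarity condition for $\totDeriv{\objFinal}{x}$, and then differentiating the active constraint $\inEqConJointEl(x^*(c),c)\equiv 0$ to eliminate $\totDeriv{x^*}{c}$. The two derivations are equivalent rearrangements of the same computation, and the paper, like you, closes by remarking that inequality constraints are handled by freezing the active set near the current iterate.
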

\begin{proof}
A full discussion on sensitivity analysis can be found in~\cite{Fiacco83}. A brief proof, specific to the case at hand, is reported in \cref{sec:app_sensitivity}.
\end{proof}

Neglecting any term that is constant with respect to $\strategy_i$, we then propose that the ego vehicle solves the following optimization problem alternatively for itself and its opponent:
\begin{equation}\label{eq:final_iterative}
\max_{\strategy_i \in \strategySpace_i^l}
\trackParam_i(\strategy_i) + \multGain 
\lagMultInEq_j^{l} \evalin{\parDeriv{\inEqConJoint_j}{\strategy_i}}{(\strategy_i^{l-1},\strategy_j^{l})} \strategy_i
\end{equation}
where $\strategySpace_i^l$ respresents the space of strategies $\strategy_i$ that satisfy~\crefrange{eq:pli_optim_prob_cn1}{eq:pli_optim_prob_cn4} with $\strategy_j = \strategy_j^l$.

\begin{theorem}
If $\inEqConJoint_1(\strategy_1,\strategy_2) = \inEqConJoint_2(\strategy_1,\strategy_2)$ and the iterations converge to a solution $(\strategy_1^l,\strategy_2^l)$, then the strategy tuple $(\strategy_1^l,\strategy_2^l)$ satisfies the necessary conditions for a Nash equilibrium.\label{thm:nash}
\end{theorem}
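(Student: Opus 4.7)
The plan is to show that at the fixed point $(\strategy_1^l,\strategy_2^l)$ each player's strategy satisfies the first-order KKT necessary conditions of the \emph{unperturbed} Nash subproblem~\cref{eq:nash}, in which the opponent is frozen at $\strategy_j^l$ and no sensitivity correction appears in the cost. The key idea is to exhibit valid multipliers for that unperturbed subproblem by folding the linearised sensitivity term of~\cref{eq:final_iterative} into the multiplier of the joint collision-avoidance constraint.

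First I would write the KKT stationarity of~\cref{eq:final_iterative} for player $i$ at $(\strategy_i^l,\strategy_j^l)$, attaching row-vector multipliers $\lagMultEqOwn_i$, $\lagMultInEqOwn_i$, $\lagMultInEq_i$ to~\crefrange{eq:pli_optim_prob_cn1}{eq:pli_optim_prob_cn4}. Since convergence implies that $\strategy_i^l$ is a local maximiser of~\cref{eq:final_iterative} when $\strategy_j$ is fixed at $\strategy_j^l$, this stationarity identity reads exactly as that of the target Nash subproblem, except for the extra gradient term $\multGain\bigl(\evalin{\parDeriv{\inEqConJoint_j}{\strategy_i}}{(\strategy_i^l,\strategy_j^l)}\bigr)^{T}\lagMultInEq_j^{l\,T}$ contributed by the sensitivity correction of~\cref{lemma:sensitivity}.

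Next I would invoke the hypothesis $\inEqConJoint_1\equiv\inEqConJoint_2$, which forces $\evalin{\parDeriv{\inEqConJoint_j}{\strategy_i}}{(\strategy_i^l,\strategy_j^l)}=\evalin{\parDeriv{\inEqConJoint_i}{\strategy_i}}{(\strategy_i^l,\strategy_j^l)}$, and then merge the two joint-constraint gradient contributions into a single Jacobian premultiplied by the effective row vector $\lagMultInEq_i^{\star}\defas\lagMultInEq_i-\multGain\lagMultInEq_j^l$. With this substitution the stationarity identity coincides with the stationarity of the Nash subproblem for player $i$. Primal feasibility is inherited because $\strategy_i^l\in\strategySpace_i^l$ is precisely the Nash admissible set, and complementary slackness of $\lagMultInEq_i^{\star}$ against the shared constraint $\inEqConJoint$ follows from the componentwise complementary slackness of $\lagMultInEq_i$ and $\lagMultInEq_j^l$, both of which vanish off the common active set of $\inEqConJoint_i=\inEqConJoint_j$. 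The argument for player $j$ is symmetric, closing the necessary KKT system for the pair.

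I expect the main obstacle to be the sign of $\lagMultInEq_i^{\star}$: the difference of two non-negative vectors need not be componentwise non-negative, so strict dual feasibility of the Nash-KKT system is not automatic. Reading "necessary conditions" as primal feasibility, stationarity, and complementary slackness at the converged point, the cancellation above is exactly what is required; a stronger statement would need either a smallness condition on $\multGain$ or an argument that any would-be negative component of $\lagMultInEq_i^{\star}$ corresponds to a constraint that is strictly inactive at the fixed point, whose multiplier may be reset to zero without disturbing stationarity.
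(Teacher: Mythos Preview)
Your proposal is correct and follows essentially the same route as the paper: write the KKT system of~\cref{eq:final_iterative} at the fixed point, use the hypothesis $\inEqConJoint_1=\inEqConJoint_2$ to identify $\parDeriv{\inEqConJoint_j}{\strategy_i}=\parDeriv{\inEqConJoint_i}{\strategy_i}$, and absorb the sensitivity term into an effective joint-constraint multiplier $\lagMultInEq_i^\star=\lagMultInEq_i^l-\multGain\lagMultInEq_j^l$, so that stationarity, primal feasibility, and complementary slackness of the Nash KKT system hold. The paper handles the dual-feasibility issue you flag in exactly one of the ways you suggest, namely by taking $\multGain$ sufficiently small; your alternative of resetting negative components on strictly inactive constraints does not help here, since by complementary slackness both $\lagMultInEq_i^l$ and $\lagMultInEq_j^l$ already vanish on inactive components, so any negative entry of $\lagMultInEq_i^\star$ necessarily sits on an active constraint.
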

\begin{proof}
See \cref{sec:app_nash}.
\end{proof}



In the drone racing scenario, in particular, using~\cref{eq:track_pos,eq:coll_avoid} after some straightforward calculation,~\cref{eq:final_iterative} reduces to
\begin{equation}\label{eq:game_final}
\max_{\strategy_i\in\strategySpace_i^l}{ \left[\arg\min_s{\tfrac{1}{2}{\norm{\track(s)-\pos_i^N}}^2} + \multGain\sum_{k=1}^{N}  \lagMult_j^{k,l}{\bearing_{ij}^{k,l}}^T\pos_i^{k} \right]}
\end{equation}
where
\begin{equation*}
\bearing_{ij}^{k,l}=\frac{\pos_j^{k,l}-\pos_i^{k,l-1}}{\norm{\pos_j^{k,l}-\pos_i^{k,l-1}}}.
\end{equation*}

To obtain a more intuitive interpretation of this result, let us assume that the track is linear and aligned to a unit vector $\trackTan$ so that the first term in~\cref{eq:game_final} can be rewritten as $\trackTan^T\pos_i^N$ (see \cref{sec:optim_ctrl} for details). Since player $i$ cannot modify the strategy of player $j$, the following problem has the same solutions as~\cref{eq:game_final}
\begin{equation}
\max_{\strategy_i\in\strategySpace_i^l}{ \trackTan^T \pos_{i}^{N} - \multGain\sum_{k=1}^{N} \lagMult_j^{k,l}{\bearing_{ij}^{k,l}}^T (\pos_j^{k,l}-\pos_i^{k}) }
\label{eq:game_linearized}
\end{equation}

We can then notice the following insightful facts.
First of all, if none of the collision avoidance constraints~\cref{eq:obs_avoid} were active in the $l$-th instance of problem~\cref{eq:game_discrete}, i.e.~if $\lagMult_j^{k,l}=0$, then~\cref{eq:game_linearized} reduces to~\cref{eq:game_discrete}. This has an intuitive explanation: if the collision avoidance constraints are not active, the optimal control problems for the two players are independent of each other and the original dynamic game reduces to a pair of classical optimal control problems. Interestingly, in this case, the only sensible strategy for a player is to advance as much as possible along the track.

The problem becomes much more interesting when the collision constraints are active ($\lagMult_j^{k,l}>0$). In this case, indeed, the cost function optimized in~\cref{eq:game_linearized} contains additional terms with respect to~\cref{eq:obs_avoid}. By inspecting these terms, one can easily notice that they have a positive effect on player $i$'s reward if robot $i$ reduces its distance from player $j$'s predicted position ($\pos_j^{k,l}$) along the direction of ${\bearing_{ij}^{k,l}}$. The intuition behind this is that, when the collision avoidance constraints are active, player $i$ can win the race by either going faster along the track or by getting in the way of player $j$, thus obstructing its motion along the path.

Isolating the last term in the summation, one can also rewrite the problem as
\begin{equation*}
\max_{\strategy_i\in\strategySpace_i^l}{ {\left(\trackTan+\multGain\lagMult_j^{k,N}{\bearing_{ij}^{k,N}}\right)}^T \pos_{i}^{N} +\multGain\sum_{k=1}^{N-1}  \lagMult_j^{k,l}{\bearing_{ij}^{k,l}}^T\pos_i^{k} }.
\end{equation*}
From this alternative expression it is clear that, depending on the value of $\multGain\lagMult_j^{k,N}$, player $i$ might actually find it more convenient to move its last position in the direction of player $j$ ($\bearing_{ij}^{k,N}$) rather than along the track ($\trackTan$). One can then also interpret the free scalar gain $\multGain$ as an \emph{aggressiveness} factor.
Using~\cref{eq:cons_dyn} one can also substitute $\pos_{i}^{N} = \pos_{i}^{n} + \sum_{k=n+1}^{N}\linCtrl_{i}^{k}$ and draw similar conclusions for any intermediate position $\pos_{i}^{n}$.

Note that player $i$ can exploit this effect only so long as it does not cause a violation of its own collision avoidance constraint~\cref{eq:obs_avoid}.

Before concluding this section, we want to stress the fact that, since the players do not communicate with each other, each of them must independently run the iterative algorithm described above and alternatively solve the optimization problem~\cref{eq:game_final} for themselves and for their opponent. 
In order to generate control inputs in real time, in our implementation we do not wait until convergence to a Nash equilibrium. Instead, we perform a constant number of iterations $L$, which can be set depending on the available computational resources. Since updating the opponent's strategy is only useful if this is exploited for recomputing a player's own strategy, we conclude each player's iterations with an extra resolution of its own optimal control problem. This also ensures that the resulting strategy profile satisfies the player's own constraints.

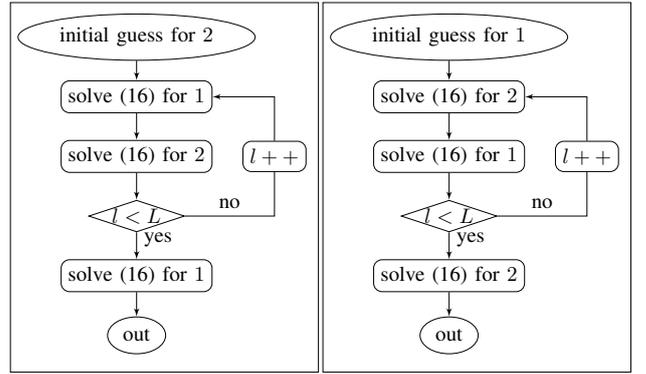
\begin{figure}
\centering
\resizebox{\columnwidth}{!}{
\tikzstyle{decision} = [diamond, draw, text badly centered, inner sep=0pt, aspect=3]
\tikzstyle{block} = [rectangle, draw, text centered, rounded corners]
\tikzstyle{line} = [draw, -latex']
\tikzstyle{cloud} = [draw, ellipse]

\begin{tikzpicture}[auto]
    \node [cloud] (init1) {initial guess for $2$};
    \node [block, below of=init1] (sol11) {solve~\cref{eq:game_final} for $1$};
    \node [block, below of=sol11] (sol12) {solve~\cref{eq:game_final} for $2$};
    \node [decision, below of=sol12] (decide1) {$l<L$};
    \node [block, below of=decide1] (sol11e) {solve~\cref{eq:game_final} for $1$};
    \node [cloud, below of=sol11e] (out1) {out};
    \node [block, right={.5cm of sol12}] (lpp) {$l++$};

	\draw ($(init1.north west)+(-.7cm,.3cm)$) rectangle ($(lpp.east |- out1.south)+(.2cm,-.3cm)$);

    \path [line] (init1) -- (sol11);
    \path [line] (sol11) -- (sol12);
    \path [line] (sol12) -- (decide1);
    \path [line] (decide1) -- node [near start] {yes} (sol11e);
    \path [line] (decide1) -| node [near start] {no} (lpp) |- (sol11);
    \path [line] (sol11e) -- (out1);

    \node [cloud] at (5.2cm,0) (init2) {initial guess for $1$};
    \node [block, below of=init2] (sol21) {solve~\cref{eq:game_final} for $2$};
    \node [block, below of=sol21] (sol22) {solve~\cref{eq:game_final} for $1$};
    \node [decision, below of=sol22] (decide2) {$l<L$};
    \node [block, below of=decide2] (sol21e) {solve~\cref{eq:game_final} for $2$};
    \node [cloud, below of=sol21e] (out2) {out};
    \node [block, right=.5cm of sol22] (lpp2) {$l++$};

 	\draw ($(init2.north west)+(-.7cm,.3cm)$) rectangle ($(lpp2.east |- out2.south)+(.2cm,-.3cm)$);

    \path [line] (init2) -- (sol21);
    \path [line] (sol21) -- (sol22);
    \path [line] (sol22) -- (decide2);
    \path [line] (decide2) -- node [near start] {yes} (sol21e);
    \path [line] (decide2) -| node [near start] {no} (lpp2) |- (sol21);
    \path [line] (sol21e) -- (out2);

\end{tikzpicture}
}
\caption{Flowchart representation of the iterative algorithm used by the two players to find approximate Nash equilibria.}
\label{fig:game_flow}
\end{figure}

As for the resolution of each player's optimal control problem~\cref{eq:game_final}, we use an iterative algorithm described in \cref{sec:optim_ctrl}.

\subsection{Numerical resolution of players' optimization}\label{sec:optim_ctrl}

The resolution strategy described in \cref{sec:num_game} relies on the assumption that, at each iteration $l$, an optimal solution to problem~\cref{eq:game_final} can be found given the current guess for the opponent strategy at the equilibrium.

Note that~\cref{eq:game_final} is a well posed problem. Indeed, due to the system dynamic constraints~\cref{eq:cons_dyn}, the input boundedness imposed by constraints~\cref{eq:cons_max_vel}, and assuming that the sampling time is finite, the set $\strategySpace_i^l$ is bounded. On the other hand $\strategySpace_i^l$ is also never empty because the solution $\strategy_i = (\pos_{i}^0,\dots,\pos_i^0,\zeros{2}{1},\dots,\zeros{2}{1})$ is always feasible, assuming that the robots do not start from a position that violates~\cref{eq:obs_avoid,eq:cons_track}.

Unfortunately, problem~\cref{eq:game_final} is also non-linear and non-convex and thus we cannot guarantee the uniqueness of an optimal solution.
A source of non-convexity, in particular, is the reciprocal collision avoidance constraint~\cref{eq:obs_avoid}. For example, player $i$ can potentially overtake player $j$ by passing on the left or right side and, in some situations, these two solutions might even result in an equivalent payoff.

Because of the aforementioned non-convexity, local optimization strategies will, in general, result in suboptimal solutions. In this work, however, we must calculate solutions to~\cref{eq:game_final} in a very limited amount of time for online control. We then opt for a local optimization strategy thus potentially sacrificing optimality for the sake of increasing performance.

Assume that player $i$ is at the $l$-th iteration of the Nash equilibrium search. The predicted strategy for player $j$ is then $\strategy_j^l$ and it remains fixed while player $i$ is solving problem~\cref{eq:game_final}, again, iteratively. In order to simplify the notation, in this section we drop the superscript $l$ that indicates the Nash equilibrium search iteration and, instead, we use the superscript to indicate the \emph{internal} iterations used to solve~\cref{eq:game_final}. Moreover, to clarify the notation even further, we use a $\const{\cdot}$ accent to indicate all quantities that remain constant across all inner iterations used to solve a single instance of~\cref{eq:game_final}. Therefore, assume that player $i$'s current guess of its optimal strategy is $\strategy_i^m$. We use $\strategy_{i}^{m}$ to compute a convex Quadratically-Constrained Linear (QCLP) approximation of problem~\cref{eq:game_final}.

Constraints~\cref{eq:cons_dyn,eq:cons_max_vel} can be used as they are because they are either linear or quadratic and convex. The linear approximation of~\cref{eq:obs_avoid,eq:cons_track} is also straightforward and results in the following constraints
\begin{align*}
&{\bearing_{ij}^{k,m}}^T (\const{\pos}_j^k-\pos_i^k) \ge \distMax_i \\
&\abs{{\trackNorm_i^{k,m}}^T (\pos_i^k-\track_i^{k,m})} \le \trackWidth, 
\end{align*}
with $\bearing_{ij}^{k,m}=\frac{\const{\pos}_j^k-\pos_i^{k,m}}{\norm{\const{\pos}_j^k-\pos_i^{k,m}}}$, $\trackNorm_i^{k,m}=\trackNorm(\pos_i^{k,m})$, and $\track_i^{k,m}=\track(\pos_i^{k,m})$.

The only term that requires some attention is the linear approximation of the cost function in~\cref{eq:final_iterative} and, in particular, of its first term because we do not have a closed form expression for $\trackParam_i$ as a function of $\pos_i^N$. However, since $\pos_i^N$ is a constant parameter in the optimization problem that defines $\trackParam_i$, we can exploit sensitivity analysis again to compute the derivative of $\trackParam_i$ with respect to $\pos_i^N$. To this end, let us rewrite
\begin{equation*}
\trackParam_i = \arg\min_\trackParam{d(\trackParam,\pos_i^N)},\text{with } d(\trackParam,\pos_i^N) = \tfrac{1}{2}{\norm{\track(s)-\pos_i^N}}^2.
\end{equation*}
Then, as shown in~\cite{Fiacco83} (and summarized in \cref{sec:app_track} for the case at hand) the derivative of $\trackParam_i$ with respect to $\pos_i^N$ can be calculated as
\begin{equation}\label{eq:sens_track}
\totDeriv{\trackParam_i}{\pos_i^N} = -{\left(\parDeriv[2]{d}{s}\right)}^{-1}\frac{\partial^2 d}{\partial s \partial \pos_i^N}  = \frac{\track'}{\norm{\track'}^2-{\left(\pos_i^N-\track\right)}^T \track''}.
\end{equation}
Exploiting the arc length parameterization and the relations~\cref{eq:tangent,eq:normal} we conclude
\begin{equation*}
\totDeriv{\trackParam_i}{\pos_i^N} = \frac{{\trackTan}^T}{1-\trackCurv{\left(\pos_i^{N}-\track\right)}^T \trackNorm} \defas \vect{\sigma}(\pos_i^N)
\end{equation*}
where $\trackTan,\trackNorm$ and $\track$ must be computed for $\trackParam = \trackParam_i(\pos_i^N)$.
Neglecting any term that does not depend on $\strategy_i$, the cost function can then be approximated around $\strategy_{i}^{m}$ as
\begin{equation*}
\vect{\sigma}_i^{m}\pos_{i}^{N} + \multGain \sum_{k=1}^{N} \const{\lagMult}_j^{k}{\const{\bearing}_{ij}^{k}}^T \pos_i^{k} 
\end{equation*}
with $\vect{\sigma}_i^{m}=\vect{\sigma}(\pos_i^{N,m})$.

The solution $\strategy_i^{m+1}$ to the approximate QCLP problem can then used to build a new approximation of problem~\cref{eq:game_final}. The sequential QCLP optimization terminates when either a maximum number of iterations has been reached or the difference between two consecutive solutions, $\residual = \norm{\strategy_i^{m+1}-\strategy_i^{m}}$, is smaller than a given threshold.

\subsection{Alternative control strategies}\label{sec:alternative_ctrl}
In order to asses the effectiveness of our approach, in the experiments of \cref{sec:experiments}, we let our controller compete with the following alternative control strategies.

\subsubsection{Model predictive control (MPC)}
This strategy is based on the realistic, but na\"ive, assumption that player $i$'s opponent will follow a straight line trajectory at (constant) maximum linear velocity along the local direction of the track, i.e.~$\linVelMax_j\trackTan(\trackParam(\pos_j^0))$. Based on this assumption, player $i$ can predict player $j$'s strategy and solve~\cref{eq:game_discrete} as a single classical optimal control problem. The numerical optimization scheme described in \cref{sec:optim_ctrl} can be used also in this case to efficiently compute a locally optimal solution.

\subsubsection{Reciprocal velocity obstacles (RVO)}
This second benchmark strategy is based on the multi-agent collision avoidance library proposed in~\cite{vanGuyLinMan2011} and implemented in the open-source library RVO2\footnote{\url{http://gamma.cs.unc.edu/RVO2/}}. We approximated the boundaries of the track with a set of polygonal obstacles. For each robot, RVO uses a reference linear velocity with maximum norm $\linVelMax_i$ and direction computed as $\trackTan(\pos_i^0)+\rho(\track(\trackParam_i^0)-\pos_i^0)$ where $\rho>0$ is a free parameter that allows to trade off between the first term, which makes the robot follow the local direction of the track, and the second one, which keeps the robot close to the center line.


\subsection{Orientation control}\label{sec:ori_ctrl}

Since the robots' onboard cameras have a limited field of view (see~\cref{eq:fov_constr}), each robot needs to actively maintain visibility of its opponent, despite both players motion, by exploiting the yaw degree of freedom.

A simple, yet effective, strategy to maintain visibility is to always align the camera axis to the relative bearing vector $\bearing_{ij}$ thus maintaining the opponent in the center of the image.

Given the planned (respectively predicted) trajectory for player $i$ and its opponent, we can then calculate the desired yaw angle for player $i$ as
\begin{equation*}
  \yaw_i^k = \atan(\ey^T\bearing_{ij}^k,\ex^T\bearing_{ij}^k),\text{with } \bearing_{ij}^k = \frac{\pos_j^k-\pos_i^k}{\norm{\pos_j^k-\pos_i^k}}.
\end{equation*}

A desired angular velocity can also be computed by differentiating consecutive samples as
$\yawRate_i^k = \frac{\yaw_i^k - \yaw_i^{k-1} }{\delta t}$.

\section{OPPONENT POSITION ESTIMATION}\label{sec:estimation}

The proposed planning strategy requires each agent to know both players positions ($\pos_1^0,\pos_2^0$) at the beginning of each planning phase.
As already mentioned, we assume that each robot knows its own position from onboard sensors. On the other hand, because of the lack of communication between the players, the position of the opponent must be estimated by fusing the visual and inertial measurements from onboard camera and IMU sensors.
In our implementation, we first exploit the onboard camera and gyro, to estimate the opponent position expressed in the local body frame of robot $i$, i.e.~$\pos_{ij}$. 
Then, we transform the final estimate into the world reference frame using the available ego state estimates.

The belief over the opponent's relative state is maintained via a Kalman Filter and the expected value of this belief is used as the opponent's state estimate in the final solution to Problem~\cref{eq:game_final}.

In order to be robust with respect to altitude control errors and robot roll and pitch rotations, for estimation purposes, we consider a 3D dynamical model.
We approximate the relative dynamics of opponent $j$ with respect to $i$ as a second order kinematic model. Assuming constant world-frame linear velocities for both robots (i.e.~$\dot\linVel_{i}=\dot\linVel_{j}=0$), differentiating~\cref{eq:bearing} we obtain
\begin{equation}
\label{eq:kf_dynamics}
\begin{bmatrix} \dot{\pos}_{ij}  \\ \dot{\linVel}_{ij} \end{bmatrix} =
\begin{bmatrix} 	-\skewZ(\angVel_i) 	& \eye{3} \\
				\mathbf{0}_{3\times 3} 		& -\skewZ(\angVel_i)  \end{bmatrix}
\begin{bmatrix} \pos_{ij} \\ \linVel_{ij} \end{bmatrix} +
\vect{w}
\, .
\end{equation}
In these dynamics, $\linVel_{ij}=\rotMat{}{i}^T(\linVel_j-\linVel_i)$, $\skewZ(\angVel_i)$ is the skew symmetric matrix built with the components of robot $i$'s body frame rotation rates $\angVel_i$ ---measured via gyroscope---and $\vect{w} \sim \mathcal{N}(\mathbf{0}_{6},\mathbf{Q})$ is additive, zero-mean Gaussian white noise with covariance matrix $\mathbf{Q} \in \real{6\times 6}$.

As discussed below, robot $i$ can measure the opponent's relative position using an onboard camera, i.e.
\begin{equation}
\label{eq:KF_measurement}
	\vect{y}_i = \pos_{ij} + \vect{v}
	\, ,
\end{equation}
where $\vect{y}_i \in \real{3}$, $\vect{v} \sim \mathcal{N}(\mathbf{0}_{3},\mathbf{R})$ is additive, zero-mean Gaussian process noise with covariance matrix $\mathbf{R} \in \real{3\times 3}$.
\Cref{eq:kf_dynamics,eq:KF_measurement} form a time-varying linear system with additive Gaussian input and measurement noise. Standard Kalman filtering techniques can then be applied to design an estimator.


We now detail the image processing pipeline that allows to retrieve $\vect{y}_i$ from the onboard camera images.
As mentioned in \cref{sec:formulation}, each robot competing in the race is fitted with a colored sphere of known (by both robots) color and radius $\const{r}_s$. The sphere center lies (approximately) on the vertical axis of the robot so that its position in the world frame is not affected by yaw rotations. The distance between the sphere and the robot frame is also known.
Finally, we assume that the extrinsic and intrinsic camera parameters are calibrated for each robot, the latter of which are modeled using the pinhole camera model~\cite{ma2012invitation}.


As discussed in~\cite{FoCh09,spica2014active}, the projection of the opponent's sphere on a player's camera image plane is, in general, an ellipse.
If this latter can be segmented from the image, then the 3D position of the sphere center with respect to the camera can be directly expressed in terms of $\const{r}_s$ and a set of image moments, up to the 2nd order, measured from the segmented area.

In our implementation, the robots extract the colored blob in the image frame via thresholding with the known color. The most circular blob that is within a tracking window from the previous blob's location is used. The image moments of the extracted blob are then calculated with OpenCV\footnote{\url{https://opencv.org/}} image moment functions.

Using the known robot-to-sphere transformation and assuming that the opponent is hovering, we can finally compute the 3D position of the opponent, i.e.~the filter measurement in~\cref{eq:KF_measurement}.

\section{RESULTS}\label{sec:experiments}

\subsection{Simulations}

In order to validate our approach, we performed an extensive simulation campaign.
We used the open-source RotorS package~\cite{rotors:2016} to simulate the full quadrotor dynamics.
To avoid rendering the onboard camera images and speed up the simulations, we did not make use of the vision-based tracking and estimation algorithms described in \cref{sec:estimation}. Instead, the two robots have access to each other's position, provided by RotorS.
Our planning algorithm was implemented in C++ and interfaced with the simulator using ROS.
We used a simulation time step of \SI{10}{\milli\second}, but we run our planners at \SI{20}{\hertz}.
We also used state-of-the-art nonlinear controllers to drive our quadrotors along the optimal trajectory resulting from the solution of~\cref{eq:game_final}.
We refer the reader to~\cite{Faessler15icra} and~\cite{Faessler16jfr} for further information about the control pipeline.

The two simulated robots have a radius of \SI{.3}{\meter} and maintain a minimum relative distance $\distMax$ of \SI{.8}{\meter} from their opponent.
The simulated track is represented in \cref{fig:track}. The track fits into a \SI{15}{\meter} $\times$ \SI{11}{\meter} rectangle and its half-width $\trackWidth$ is \SI{1.5}{\meter}. The origin of the world frame was set at the center of the longest straight segment of the track.

We let our game-theoretic planner and the alternative strategies described in \cref{sec:alternative_ctrl} compete against each other over multiple races differing by the robots initial positions. In order to enforce some interaction, we set the maximum linear velocities of the two robots to \SI{.5}{\meter/\second} and \SI{.6}{\meter/\second} and we made the faster robot always start behind the slower one. In particular, for each race, we sampled the initial position of the faster robot from a uniform distribution in the rectangle $[-0.1,1.5]\times[-0.7,0.7]$. Similarly, the position of the slower robot was sampled from the rectangle $[1.6,1.7]\times[-0.7,0.7]$. We discarded any pair of sampled initial positions that would violate the collision avoidance constraints~\cref{eq:coll_avoid}.

We ran a total of 900 simulations in which the same 150 sampled initial conditions were used for each of the following scenarios:
\begin{expcases}
\item fast GTP vs. slow MPC; \label{case:gtp_mpc}
\item fast MPC vs. slow GTP; \label{case:mpc_gtp}
\item fast GTP vs. slow RVO; \label{case:gtp_rvo}
\item fast RVO vs. slow GTP; \label{case:rvo_gtp}
\item fast MPC vs. slow RVO; \label{case:mpc_rvo}
\item fast RVO vs. slow MPC. \label{case:rvo_mpc}
\end{expcases}
Here, and in the rest of this section, the acronym GTP indicates the Game Theoretic Planner developed in this paper.
We terminated each simulation as soon as one robot completed an entire track loop and reached the finish line positioned at $x=$~\SI{2.32}{\meter}.

\begin{figure}[tb]
\centering
\begin{subfigure}{0.5\columnwidth}
\centering
\includegraphics[width=\columnwidth]{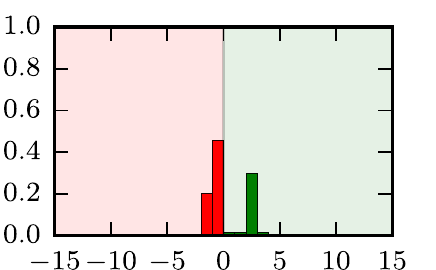}
\caption{\Cref{case:gtp_mpc}: fast GTP vs. slow MPC}\label{fig:histo_1}
\end{subfigure}%
\begin{subfigure}{0.5\columnwidth}
\centering
\includegraphics[width=\columnwidth]{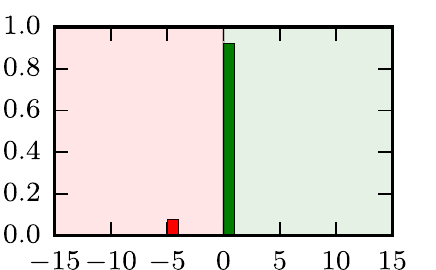}
\caption{\Cref{case:mpc_gtp}: fast MPC vs. slow GTP}\label{fig:histo_2}
\end{subfigure}
\begin{subfigure}{0.5\columnwidth}
\centering
\includegraphics[width=\columnwidth]{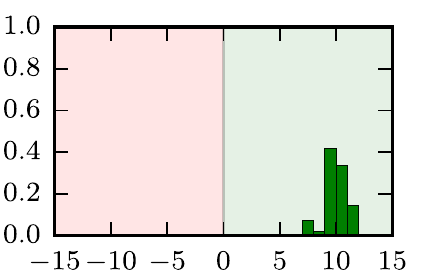}
\caption{\Cref{case:gtp_rvo}: fast GTP vs. slow RVO}\label{fig:histo_3}
\end{subfigure}%
\begin{subfigure}{0.5\columnwidth}
\centering
\includegraphics[width=\columnwidth]{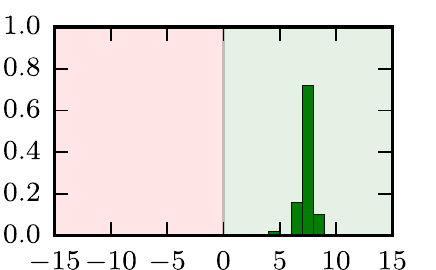}
\caption{\Cref{case:rvo_gtp}: fast RVO vs. slow GTP}\label{fig:histo_4}
\end{subfigure}
\begin{subfigure}{0.5\columnwidth}
\centering
\includegraphics[width=\columnwidth]{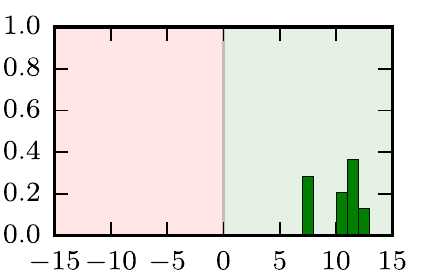}
\caption{\Cref{case:mpc_rvo}: fast MPC vs. slow RVO}\label{fig:histo_5}
\end{subfigure}%
\begin{subfigure}{0.5\columnwidth}
\centering
\includegraphics[width=\columnwidth]{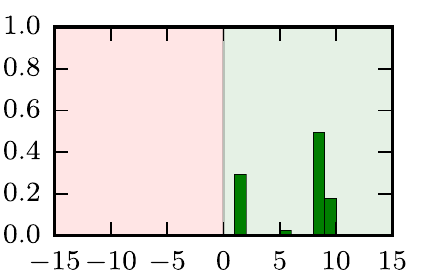}
\caption{\Cref{case:rvo_mpc}: fast RVO vs. slow MPC}\label{fig:histo_6}
\end{subfigure}%
\caption{Histogram representation of the final arch-length difference (as in~\cref{eq:objective}) between the two robots for all of the simulations. The simulations are divided by case as indicated in the captions. In~(\protect\subref{fig:histo_1}) to~(\protect\subref{fig:histo_4}), the distance is calculated in such a way that it is positive when the robot controlled using GTP wins the race and negative otherwise. In~(\protect\subref{fig:histo_5}) and~(\protect\subref{fig:histo_6}), instead, a positive distance indicates a victory for the MPC planner over RVO. A green and red coloring is also used to highlight positive and negative parts of the histogram.}
\label{fig:sim_histograms}
\end{figure}

\begin{figure}[tb]
\centering
\begin{subfigure}{0.5\columnwidth}
\centering
\includegraphics[width=\columnwidth]{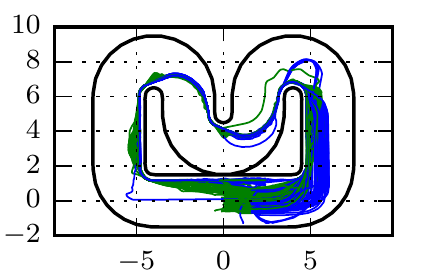}
\caption{\Cref{case:gtp_mpc}: fast GTP vs. slow MPC}\label{fig:trace_1}
\end{subfigure}%
\begin{subfigure}{0.5\columnwidth}
\centering
\includegraphics[width=\columnwidth]{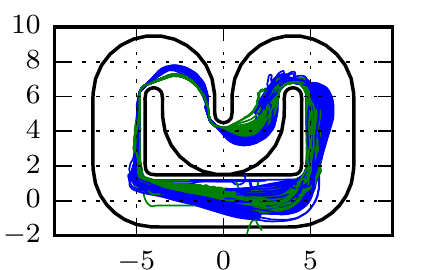}
\caption{\Cref{case:mpc_gtp}: fast MPC vs. slow GTP}\label{fig:trace_2}
\end{subfigure}
\begin{subfigure}{0.5\columnwidth}
\centering
\includegraphics[width=\columnwidth]{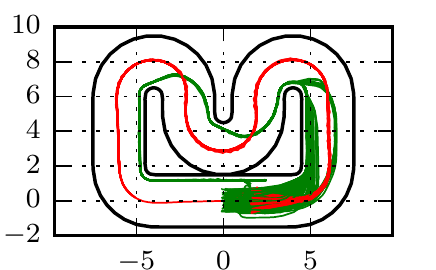}
\caption{\Cref{case:gtp_rvo}: fast GTP vs. slow RVO}\label{fig:trace_3}
\end{subfigure}%
\begin{subfigure}{0.5\columnwidth}
\centering
\includegraphics[width=\columnwidth]{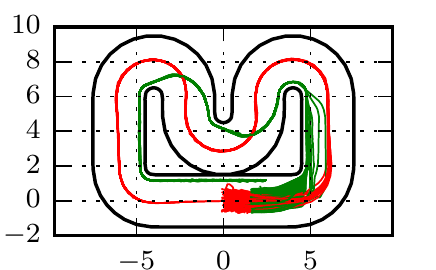}
\caption{\Cref{case:rvo_gtp}: fast RVO vs. slow GTP}\label{fig:trace_4}
\end{subfigure}
\begin{subfigure}{0.5\columnwidth}
\centering
\includegraphics[width=\columnwidth]{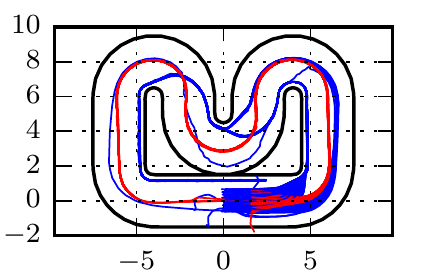}
\caption{\Cref{case:mpc_rvo}: fast MPC vs. slow RVO}\label{fig:trace_5}
\end{subfigure}%
\begin{subfigure}{0.5\columnwidth}
\centering
\includegraphics[width=\columnwidth]{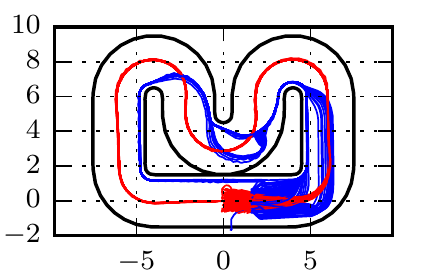}
\caption{\Cref{case:rvo_mpc}: fast RVO vs. slow MPC}\label{fig:trace_6}
\end{subfigure}%
\caption{Position traces for the two competing robots for all of the simulations. The simulations are divided by case as indicated in the captions and the following color code was used for the planners: GTP -- green, MPC -- blue, RVO -- red.}
\label{fig:sim_traces}
\end{figure}

In \cref{fig:sim_histograms}, we report an histogram representation of the final distance along the track (i.e.~the arch-length difference~\cref{eq:objective}) between the two robots. In \cref{case:gtp_mpc,case:mpc_gtp,case:gtp_rvo,case:rvo_gtp}, the distance is calculated in such a way that it is positive when the robot controlled using GTP wins the race and negative otherwise. In \cref{case:mpc_rvo,case:rvo_mpc}, instead, a positive distance indicates a victory for the MPC planner over RVO.
A green and red coloring is also used to highlight positive and negative parts of the histogram. 

In \cref{fig:sim_traces}, we report the position traces for the two competing robots for all of the simulations. The traces are divided by case and we used the following color code: GTP -- green, MPC -- blue, RVO -- red.

First of all, as it can be seen from \cref{fig:histo_3,fig:histo_4,fig:histo_5,fig:histo_6}, the RVO strategy is clearly the least effective one among the three alternatives considered in this paper. Regardless of the initial placing and of the possible advantage in terms of maximum speed, the robots controlled with this strategy lost all races. We believe that the main reason for this poor performance is the fact that RVO is a reactive (instantaneous) control strategy while both GTP and MPC use an extended planning horizon. Because of the lack of planning, RVO does not \emph{anticipate} (and appropriately cut) the curves and ends up following a longer trajectory. This can clearly be noticed by looking at \cref{fig:trace_3,fig:trace_4,fig:trace_5,fig:trace_6}. The performance could possibly be improved by considering alternative heuristics in the calculation of the reference/desired velocity for the RVO algorithm.

The comparison between GTP and MPC is somewhat more fair because both strategies effectively follow the track and the only difference between the two lies in the way they interact with the opponent (the two algorithms are perfectly identical when the two robots do not interact). A direct comparison between the two strategies is provided by
\cref{fig:histo_1,fig:histo_2}. From \cref{fig:histo_1}, we can notice that, when the drone running the GTP planner is faster than the MPC one, it manages to overtake the MPC planner, which starts from an advantageous position, in approximately 30\% of the races. A closer look at the simulations reveals that, in this scenario, the GTP planner often tends to "overestimate" its opponent (it assumes the opponent is using GTP as well). The consequence of this is that, when attempting to overtake, it expects the opponent to block its motion and ends up following an overly cautions trajectory moving sideways along the track more than necessary.

On the other hand, when the GTP is slower, it manages to defend its initial advantage for the vast majority of the races (see \cref{fig:histo_2}). Looking at \cref{fig:trace_2}, we can clearly visualize the strategy adopted by the GTP planner to defend its position, especially towards the end of the race (the bottom straight part of the track). The GTP planner clearly moves sideways along the track to block the MPC planner thus exploiting the collision avoidance constraint to its own advantage. The MPC planner, instead, cannot adopt a similar strategy because it does not properly model the reactions of its opponent. On the contrary, by assuming that the opponent will move straight along the path, completely careless of possible collisions, the MPC planner is often forced to make room to the opponent because of its own collision avoidance constraints (see, for example, the blue traces in the top right part of \cref{fig:trace_1}).

An indirect comparison between GTP and MPC can also be done by analyzing how they both perform against RVO in similar situations. Both when competing against a slower robot (see \cref{fig:histo_3,fig:histo_5}) and against a faster one (see \cref{fig:histo_4,fig:histo_6}) the GTP planner tends to win with a slightly larger separation in average and a much more narrow distribution of final distances. The improvement is more significant when the GTP is playing in a "defensive" role, i.e.~it is controlling a slower robot with an initial advantage. We believe that this is due, once again, to an overly cautious behavior when attempting to overtake the opponent in \cref{case:gtp_mpc,case:gtp_rvo}.

\subsection{Experiments}

We also validated our approach by implementing and testing it on real hardware.

Our quadrotors are based on the DJI F330 frame and equipped with both off-the-shelf and custom-made components.
The robots have a take-off weight of approximately \SI{900}{\gram} and a diagonal rotor distance of \SI{33}{\centi\meter}.

A PX4FMU autopilot provides, among other sensors, an Inertial Measurement Unit and a micro-controller running a low-level controller for the robot orientation and bodyrates.
An Odroid single-board computer is used to run the algorithm described in \cref{sec:estimation} in order to estimate the state of the other vehicle from images acquired by a forward-facing MatrixVision mvBlueFOX-MLC200w $752\!\times\!480$-pixel RGB camera with a framerate of \SI{15}{fps}.
The motion planner and the high-level position controller, instead, were run on a ground-station which communicates with the onboard Odroids through Wi-Fi.
Actuation is provided by four T-MOTOR MT2208 motors, controlled by Dys XSD 20A speed controllers.

In addition to the onboard sensors, each robot is also provided, through wireless communication, with ego pose measurements from an Optitrack motion-capture system. This information is fused onboard with inertial measurements in an Extended Kalman Filter~\cite{Lynen13iros} to estimate the full robot state.

In our experiment, both robots were running the game-theoretic planner described in this paper. They both had a maximum speed limitation of \SI{0.6}{\meter/\second} and assumed \SI{0.3}{\meter/\second} limitation for their opponent. The minimum allowed distance between the two was \SI{0.6}{\meter}.
The experimental track center line was a square of \SI{2.3}{\meter} sides with rounded corners and a half-width $\trackWidth$ of \SI{0.9}{\meter}.

Some snapshots of the experiment are reported in \cref{fig:exp_snaps} but we highly encourage the reader to visualize the attached video (also available at \videourl{}) to better appreciate the robots behavior.
In the snapshots (as well as in the video) we used solid red and blue lines to represent the trajectory currently planned by each robot and dashed lines of opposite colors for the trajectory predicted, for the same robot, by the adversary.

\begin{figure*}[b]
\centering
\includegraphics[width=\columnwidth]{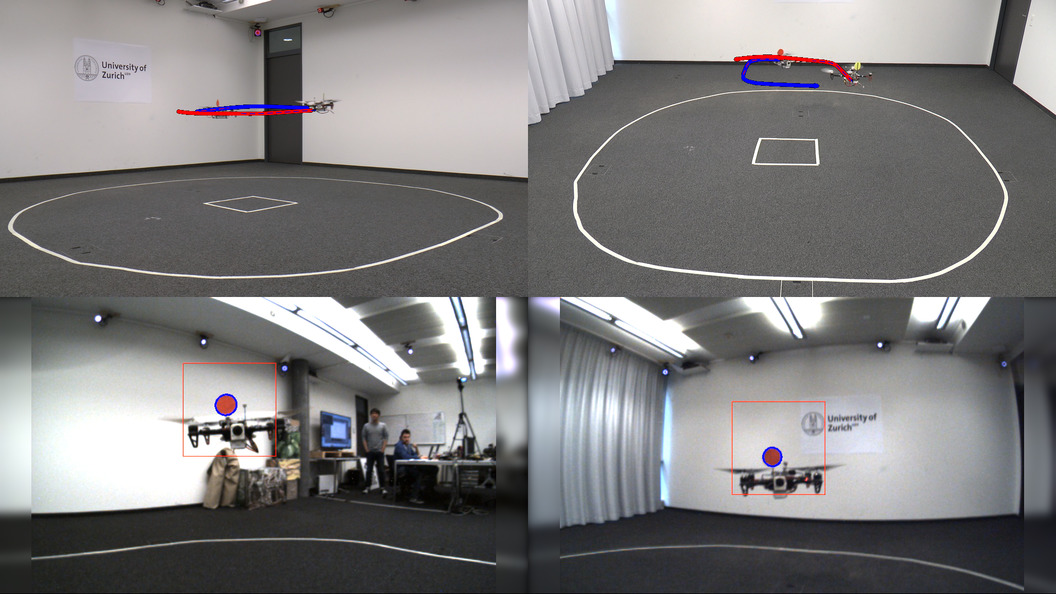}
\includegraphics[width=\columnwidth]{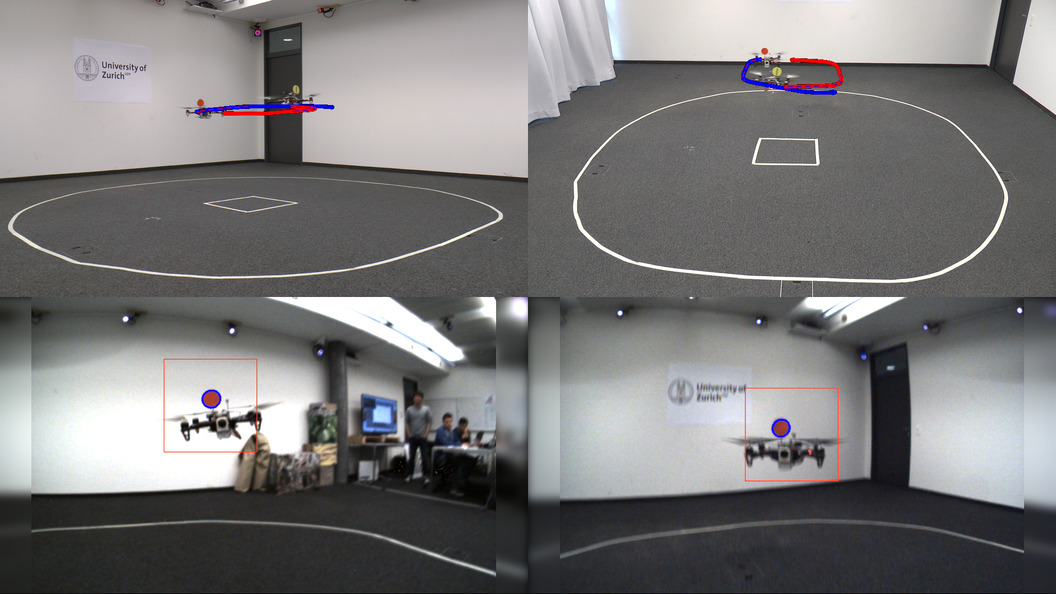}\vspace{1.5mm}
\includegraphics[width=\columnwidth]{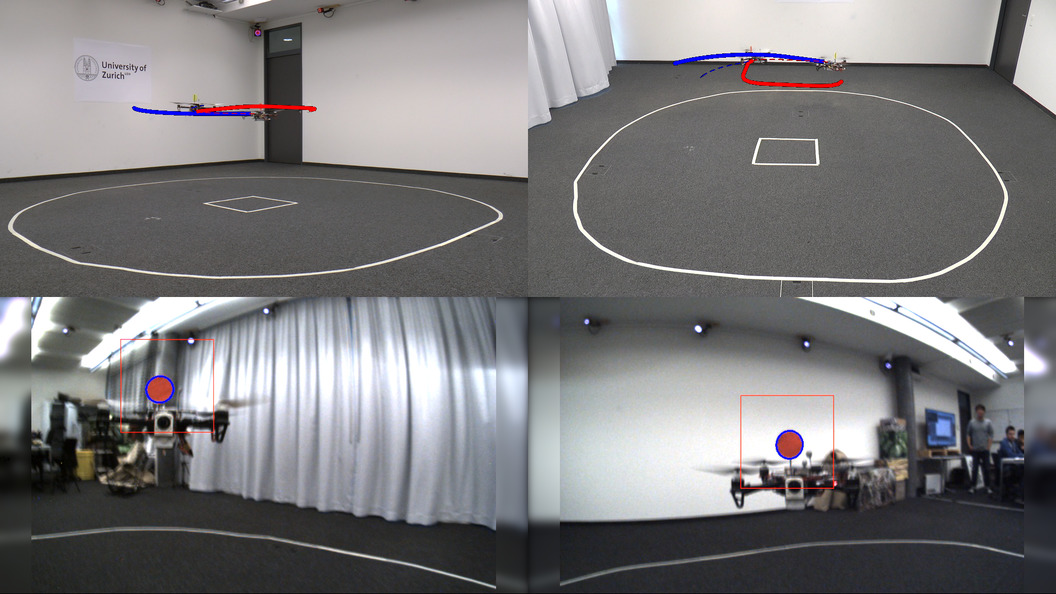}
\includegraphics[width=\columnwidth]{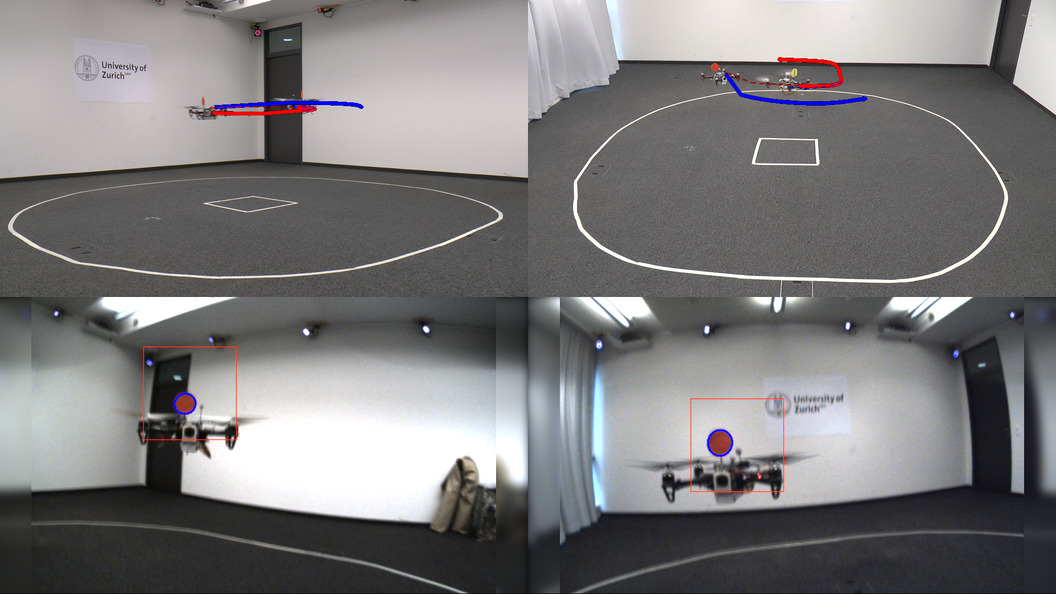}
\caption{Snapshots of the real hardware experiments using our game-theoretic planner for two-player drone racing. The top row shows two external views of the experiment. Solid thick lines indicate the current trajectory planned by each robot for itself. Dashed thinner line of the same color indicate the predicted opponent trajectory. The bottom row shows images from the robots' onboard cameras with an overlay of the visual tracking results. We highly encourage the reader to view the attached video, which is also available at \videourl{}.}
\label{fig:exp_snaps}
\end{figure*}

As it can be noticed by the video, our experimental space is not sufficiently large for the robots to overtake each other given their size and the track width. However, one can still appreciate how each robot continuously replans its own and its opponent trajectory as well as the interactions that each player predicts for the future. In particular, one can notice that the red robot mostly follows the track without expecting to have any interaction with its opponent (which it assumes to be slower). On the contrary, the blue robot plans to overtake the adversary and, in some cases, it expects this latter to block it by moving sideways (see, for example, the blue lines in the third snapshot).
The video also demonstrates, in the bottom, the performance of our vision-based tracking and estimation algorithms discussed in \cref{sec:estimation}.

\section{CONCLUSIONS}\label{sec:conclusion}

In this paper we described a novel online motion planning algorithm for two-player drone racing.
By exploiting sensitivity analysis within an iterated best response algorithm, our planner can effectively model and even exploit the opponent's reactions.

From a theoretical point of view, we showed that, if the iterative resolution strategy converges to a solution, then this latter satisfies necessary conditions for a Nash equilibrium.
Moreover, we demonstrated the effectiveness of our approach through an extensive set of simulations in which our planner was let to compete against two alternative, and well established, approaches.
Finally, we also presented a vision-based tracking and estimation algorithm that can efficiently estimate the opponent's pose.

Both our planner and our estimation strategies can run in real time and their performance was demonstrated through experimental tests on real hardware.

Despite the encouraging results, our planner still presents some weakness. First of all, because of the non-convexity of our problem, the optimization algorithm can converge to local minima. In the future, we want to investigate the use of mixed integer approaches to better handle the non-convex constraints of our problem.
In addition to this, our planner assumes that the opponent is using a similar planning strategy with a known cost function. On the one hand, our algorithm showed very good performance when competing against alternative strategies (which clearly violate this assumption). On the other hand, we noticed that, in certain situations, our planner can "overestimate" the intelligence of the adversary and generate overly conservative trajectories. In this respect, it would be interesting to couple our strategy with an online approach for learning the opponent policy and/or cost function.
We also plan to extend our results to races involving more than two players.

Finally, we believe that similar game theoretic approaches would prove successful in a number of applications, particularly in autonomous driving scenarios involving multiple cars and/or pedestrians.

\appendix
\subsection{Proof of \texorpdfstring{\cref{lemma:sensitivity}}{Lemma~\ref{lemma:sensitivity}}}\label{sec:app_sensitivity}

In order to simplify the notation as much as possible, in this subsection we consider a streamlined form for the optimization problem of the form
\begin{equation}\label{eq:simple_optim}
\begin{aligned}
\max_x~&\objFinal(x) \\
\text{s.t. } &\inEqConJointEl(x,c) = 0
\end{aligned}
\end{equation}
where $c$ is a scalar parameter and $\objFinal$ and $\inEqConJointEl$ are scalar differentiable functions of their arguments.
For each value of $c$, let us indicate with $x^*(c)$ the solution of~\cref{eq:simple_optim} and with $\bestresponse(c) = \objFinal(x^*(c))$ the associated optimal outcome.
We want to study how the optimal cost $\bestresponse$ changes when $c$ changes around a point $\const{c}$, i.e.
\begin{equation}\label{eq:sens_deriv}
  \evalin{\totDeriv{\bestresponse(c)}{c}}{\const{c}} = \evalin{\totDeriv{\objFinal(x^*(c))}{c}}{\const{c}} = 
   \evalin{\totDeriv{\objFinal(x)}{x}}{x^*(\const{c})}\evalin{\totDeriv{x^*(c)}{c}}{\const{c}}
\end{equation}
Since, for all $c$, $x^*(c)$ is an optimal solution to~\cref{eq:simple_optim}, it must satisfy the KKT necessary optimality conditions associated to~\cref{eq:simple_optim}, i.e.
\begin{align}
  &\evalin{\totDeriv{\objFinal(x)}{x}}{x^*} - \lagMult \evalin{\parDeriv{\inEqConJointEl(x,c)}{x}}{x^*} = 0 \label{eq:con_at_opt_1}\\
  &\inEqConJointEl(x^*(c),c) = \inEqConJointEl^*(c) = 0 \label{eq:con_at_opt_2}
\end{align}
where $\lagMult$ is the Lagrange multiplier associated to the equality constraint.
Isolating the first term in~\cref{eq:con_at_opt_1} and substituting it in~\cref{eq:sens_deriv} we obtain
\begin{equation}\label{eq:sens_deriv1}
  \evalin{\totDeriv{\bestresponse(c)}{c}}{\const{c}} = 
  \lagMult \evalin{\parDeriv{\inEqConJointEl(x,c)}{x}}{x^*(\const{c})}\evalin{\totDeriv{x^*(c)}{c}}{\const{c}}.
\end{equation}
Note that, since~\cref{eq:con_at_opt_2} must remain true for all $c$, its total derivative w.r.t.~$c$ must also be zero, i.e.
\begin{equation}\label{eq:con_at_opt_deriv}
  \totDeriv{\inEqConJointEl^*(c)}{c} = \evalin{\parDeriv{\inEqConJointEl(x,c)}{x}}{x^*}\totDeriv{x^*(c)}{c}  + \evalin{\parDeriv{\inEqConJointEl(x,c)}{c}}{x^*} = 0.
\end{equation}
Isolating the first term from~\cref{eq:con_at_opt_deriv} and substituting it in~\cref{eq:sens_deriv1}, we finally conclude that
\begin{equation*}
  \evalin{\totDeriv{\bestresponse(c)}{c}}{\const{c}} = 
   - \lagMult \evalin{\parDeriv{\inEqConJointEl(x,c)}{c}}{x^*(\const{c})},
\end{equation*}
which reduces to~\cref{eq:sensitivity} for $x=\strategy_j$, $\const{c}=\strategy_i^{l-1}$, and $x^*=\strategy_j^l$.

This proof can trivially be extended to problems with multiple joint constraints or with additional constraints that do not depend on $c$ (their derivatives with respect to $c$ will simply be null).
If the problem contains inequality constraints, instead, under the assumption that, in the vicinity of $c$, the set of active constraints remains the same, the proof can readily be applied by just considering an equivalent problem in which any active inequality constraint is transformed into an equality constraints and any inactive constraint is ignored.

\subsection{Proof of \texorpdfstring{\cref{thm:nash}}{Theorem~\ref{thm:nash}}}\label{sec:app_nash}
Applying Karush-Kuhn-Tucker conditions to~\cref{eq:pli_optim_prob} one obtains the following set of necessary conditions for a Nash equilibrium $(\strategy_1^*,\strategy_2^*)$ and the associated Lagrange multipliers
\begin{subequations}
\begin{align}
&\parDeriv{\objFinal_i}{\strategy_i}(\strategy_i^*) - \lagMultInEq_i^* \parDeriv{\inEqConJoint_i}{\strategy_i}(\strategy_i^*,\strategy_j^*) \label{eq:kkt_game1} \\
&\qquad -\lagMultEqOwn_i^* \parDeriv{\eqConOwn_i}{\strategy_i}(\strategy_i^*) - \lagMultInEqOwn_i^* \parDeriv{\inEqConOwn_i}{\strategy_i}(\strategy_i^*) = 0 \nonumber\\
&\eqConOwn_i(\strategy_i^*) = 0 \\
&\inEqConOwn_i(\strategy_i^*) \le 0 \\
&\lagMultInEqOwn_i^* \inEqConOwn_i(\strategy_i^*) = 0, \lagMultInEqOwn_i^*\ge 0 \\
&\inEqConJoint_i(\strategy_i^*,\strategy_j^*) \le 0 \label{eq:kkt_game2} \\
&\lagMultInEq_i^* \inEqConJoint_i(\strategy_i^*,\strategy_j^*) = 0, \lagMultInEq_i^*\ge 0\label{eq:kkt_game3}
\end{align}
\end{subequations}

Now assume that the iterative algorithm described in \cref{sec:game} converges to a solution $(\strategy_1^l,\strategy_2^l)$, i.e.~$\strategy_i^{l+1} = \strategy_i^{l}$ for both players. Then, by applying the KKT conditions to problem~\cref{eq:final_iterative},  $(\strategy_1^l,\strategy_2^l)$ must satisfy
\begin{subequations}
\begin{align}
&\parDeriv{\objFinal_i}{\strategy_i}(\strategy_i^l) + \multGain_i\lagMultInEq_j^l \parDeriv{\inEqConJoint_j}{\strategy_i}(\strategy_i^l,\strategy_j^l) -\lagMultInEq_i^l \parDeriv{\inEqConJoint_i}{\strategy_i}(\strategy_i^l,\strategy_j^l) \\
&\qquad\lagMultEqOwn_i^l \parDeriv{\eqConOwn_i}{\strategy_i}(\strategy_i^l) - \lagMultInEqOwn_i^l \parDeriv{\inEqConOwn_i}{\strategy_i}(\strategy_i^l) = 0 \nonumber\\
&\eqConOwn_i(\strategy_i^l) = 0 \\
&\inEqConOwn_i(\strategy_i^l) \le 0 \\
&\lagMultInEqOwn_i^l \inEqConOwn_i(\strategy_i^l) = 0, \lagMultInEqOwn_i^l\ge 0 \\
&\inEqConJoint_i(\strategy_i^l,\strategy_j^l) \le 0 \\
&\lagMultInEq_i^l \inEqConJoint_i(\strategy_i^l,\strategy_j^l) = 0, \lagMultInEq_i^l\ge 0 \label{eq:kkt_app}
\end{align}
\end{subequations}

If one additionally has 
$\parDeriv{\inEqConJoint_i}{\strategy_i}(\strategy_i^l,\strategy_j^l) = \parDeriv{\inEqConJoint_j}{\strategy_i}(\strategy_i^l,\strategy_j^l)$ (as it is the case for our problem), then one can see that $(\strategy_1^l,\strategy_2^l)$ satisfy \crefrange{eq:kkt_game1}{eq:kkt_game2} with $\lagMultEqOwn_{i}^*=\lagMultEqOwn_{i}^l, \lagMultInEqOwn_{i}^*=\lagMultInEqOwn_{i}^l$ and
$\lagMultInEq_i^*=\lagMultInEq_i^l-\multGain_i\lagMultInEq_j^l$.
In order to satisfy~\cref{eq:kkt_game3}, however, one also needs to impose that:
\begin{subequations}
\begin{align}
&(\lagMultInEq_1^l-\multGain_1\lagMultInEq_2^l)\inEqConJoint_1(\strategy_1^l,\strategy_2^l) = 0 \label{eq:kkt_add1}\\
&\lagMultInEq_1^l \ge \multGain_1\lagMultInEq_2^l \label{eq:kkt_add2} \\
&(\lagMultInEq_2^l-\multGain_2\lagMultInEq_1^l)\inEqConJoint_2(\strategy_1^l,\strategy_2^l) = 0 \label{eq:kkt_add3}\\
&\lagMultInEq_2^l \ge \multGain_2\lagMultInEq_1^l. \label{eq:kkt_add4}
\end{align}
\end{subequations}
Using~\cref{eq:kkt_app}, \cref{eq:kkt_add1,eq:kkt_add3} reduce to
\begin{subequations}
\begin{align*}
&\multGain_1\lagMultInEq_2^l\inEqConJoint_1(\strategy_1^l,\strategy_2^l) = 0 \\
&\multGain_2\lagMultInEq_1^l\inEqConJoint_2(\strategy_1^l,\strategy_2^l) = 0.
\end{align*}
\end{subequations}

Exploiting, again,~\cref{eq:kkt_app}, this condition is satisfied if $\inEqConJoint_1(\strategy_1^l,\strategy_2^l) = \inEqConJoint_2(\strategy_1^l,\strategy_2^l)$ for all active constraints and if the sets of active constraints are the same for both players (i.e.~$\lagMultInEq_1^l>0\iff\lagMultInEq_2^l>0$). Both these conditions are satisfied if, as it is the case for our application, $\inEqConJoint_1(\strategy_1^l,\strategy_2^l) = \inEqConJoint_2(\strategy_1^l,\strategy_2^l)$.
As for~\cref{eq:kkt_add2,eq:kkt_add4}, instead, if $\inEqConJoint_i(\strategy_i,\strategy_j) = \inEqConJoint_j(\strategy_i,\strategy_j)$, one can enforce it by making $\multGain_i$ arbitrarily small. 

\subsection{Proof of \texorpdfstring{\cref{eq:sens_track}}{(\ref{eq:sens_track})}}\label{sec:app_track}
Consider the following optimization problem:
\begin{equation*}
\min_\trackParam{d(\trackParam,\pos_i^N)}.
\end{equation*}
We can interpret $\pos_i^N$ as a constant parameter and study how the solution $\trackParam_i$ to the above problem changes when $\pos_i^N$ changes around a point $\const{\pos}_i^N$.
Under the optimality assumption, for each value $\pos_i^N$, the corresponding solution $\trackParam_i(\pos_i^N)$ must satisfy the following necessary condition
\begin{equation}\label{eq:track_par_deriv}
\evalin{\parDeriv{d(\trackParam,\pos_i^N)}{\trackParam}}{\trackParam_i(\pos_i^N)} = 0.
\end{equation}
Note that the left hand side of~\cref{eq:track_par_deriv} is a function of $\pos_i^N$ only and it must be zero for all $\pos_i^N$. Therefore, its derivative with respect to $\pos_i^N$ must also be zero
\begin{equation*}
\begin{aligned}
0 &= \totDeriv{}{\pos_i^N}\left[\evalin{\parDeriv{d(\trackParam,\pos_i^N)}{\trackParam}}{\trackParam_i(\pos_i^N)}\right] \\
&=\evalin{\parDeriv[2]{d(\trackParam,\pos_i^N)}{\trackParam}}{\trackParam_i(\pos_i^N)} \totDeriv{\trackParam_i(\pos_i^N)}{\pos_i^N} +
\MixParDeriv[2]{d(\trackParam,\pos_i^N)}{\partial \trackParam \partial \pos_i^N}.
\end{aligned}
\end{equation*}
We can, then, conclude that:
\begin{equation*}
 \totDeriv{\trackParam_i(\pos_i^N)}{\pos_i^N}= - {\left[\evalin{\parDeriv[2]{d(\trackParam,\pos_i^N)}{\trackParam}}{\trackParam_i(\pos_i^N)}\right]}^{-1}
\MixParDeriv[2]{d(\trackParam,\pos_i^N)}{\partial \trackParam \partial \pos_i^N}
\end{equation*}
q.e.d.


\IEEEtriggeratref{29}
\bibliographystyle{IEEEtran}
\bibliography{IEEEfull,uavgame}

\end{document}